%% file: main.tex
\RequirePackage[T1]{fontenc}
\documentclass[conference]{IEEEtran}
\usepackage[utf8]{inputenc}
\ifdefined\XeTeXversion
  \catcode`’=\active \def’{\textquoteright}
  \catcode`–=\active \def–{\textendash}
\fi
\usepackage{amsmath,amssymb,amsfonts,amsthm}
\usepackage{array,booktabs,siunitx}
\usepackage{graphicx,nicefrac,microtype}
\usepackage[numbers,sort&compress]{natbib}
\usepackage{url}
\usepackage{placeins}
\usepackage[hidelinks]{hyperref}
\hypersetup{pdfauthor={Vishwajith Ramesh},pdftitle={Exact Record Omission in Delta Attention: A Transport Criterion, Its Cost, and a Replay Certificate},pdfsubject={Public preprint}}
\newtheorem{theorem}{Theorem}

\newtheorem{corollary}{Corollary}
\theoremstyle{definition}

\title{Exact Record Omission in Delta Attention:\\A Transport Criterion, Its Cost, and a Replay Certificate}
\author{\IEEEauthorblockN{Vishwajith Ramesh}
\IEEEauthorblockA{Vy Labs, Inc.\\
\texttt{vish@vylabs.ai}}}
\input{kda_sweep_macros}

\begin{document}
\maketitle
\begin{abstract}
\input{abstract_kda}
\end{abstract}
\begin{IEEEkeywords}
Inference-time memory, machine unlearning, delta attention, privacy, state verification.
\end{IEEEkeywords}
\input{body_kda}
\input{satml_statements}
\bibliographystyle{IEEEtranN}
\bibliography{references}
\appendices
\input{appendix_kda}

\end{document}

%% file: kda_sweep_macros.tex
\providecommand{\FamExactThreshold}{1\times10^{-4}}
\providecommand{\FamFalconAdmitted}{12}
\providecommand{\FamFalconAttempted}{12}

\providecommand{\FamFalconForcingMedian}{1.04}

\providecommand{\FamFalconFullErrMax}{2.6\times10^{-6}}

\providecommand{\FamFalconLedgerErrMax}{2.6\times10^{-6}}
\providecommand{\FamFalconLiftFullMedianLast}{0.77}

\providecommand{\FamFalconLiftPresentMedianLast}{0.80}

\providecommand{\FamFalconStaticErrMax}{3.0}
\providecommand{\FamMambaAdmitted}{12}
\providecommand{\FamMambaAttempted}{12}

\providecommand{\FamMambaForcingMedian}{1.02}

\providecommand{\FamMambaFullErrMax}{1.6\times10^{-6}}

\providecommand{\FamMambaLedgerErrMax}{1.6\times10^{-6}}
\providecommand{\FamMambaLiftFullMedianLast}{-0.23}

\providecommand{\FamMambaLiftPresentMedianLast}{0.55}

\providecommand{\FamMambaStaticErrMax}{2.1}
\providecommand{\FamRwkvAdmitted}{12}
\providecommand{\FamRwkvAttempted}{12}

\providecommand{\FamRwkvForcingMedian}{1.00}

\providecommand{\FamRwkvFullErrMax}{1.9\times10^{-6}}

\providecommand{\FamRwkvLedgerErrMax}{0.191}
\providecommand{\FamRwkvLiftFullMedianLast}{-0.30}

\providecommand{\FamRwkvLiftPresentMedianLast}{0.79}

\providecommand{\FamRwkvRecurrenceErrMax}{0}

\providecommand{\FamRwkvStaticErrMax}{0.518}

\providecommand{\MlaConfigs}{8}
\providecommand{\MlaLayers}{7}
\providecommand{\MlaMaxAbsDiff}{0}
\providecommand{\MlaSources}{synthetic and TOFU}

\providecommand{\SampSynK}{100}

\providecommand{\SweepSynAdmitted}{24}
\providecommand{\SweepSynAttempted}{24}
\providecommand{\SweepSynCheckpointMiB}{41.4}
\providecommand{\SweepSynDecompErrMax}{9.0\times10^{-4}}
\providecommand{\SweepSynForcingShareMaxLast}{2.712}
\providecommand{\SweepSynForcingShareMedianFirst}{0.67}
\providecommand{\SweepSynForcingShareMedianLast}{0.98}
\providecommand{\SweepSynForcingShareMinLast}{0.000}
\providecommand{\SweepSynFullResidualMedianLast}{4.4}

\providecommand{\SweepSynImprintMaxLast}{24.2}
\providecommand{\SweepSynImprintMedianLast}{4.5}
\providecommand{\SweepSynImprintMedianZero}{30.8}
\providecommand{\SweepSynImprintMinLast}{0.3}

\providecommand{\SweepSynLedgerResidualMedianLast}{6.0}

\providecommand{\SweepSynLogKiBPerToken}{481}
\providecommand{\SweepSynLogMiBPerThousandTokens}{470}
\providecommand{\SweepSynLogTokensPerCheckpoint}{88}
\providecommand{\SweepSynMaxCut}{4{,}096}

\providecommand{\SweepSynNumCuts}{10}
\providecommand{\SweepSynPrefillMsPerToken}{1.32}
\providecommand{\SweepSynPrefixDepths}{0, 4, 16, 64}

\providecommand{\SweepSynReplayMaxDiff}{0}

\providecommand{\SweepSynStaticResidualMedianLast}{27.5}
\providecommand{\SweepSynTransportMsPerToken}{9.4}
\providecommand{\SweepSynTruncEighthMedian}{0.77}
\providecommand{\SweepSynTruncHalfMedian}{0.56}
\providecommand{\SweepSynTruncQuarterMedian}{0.69}
\providecommand{\SweepSynVictims}{6}
\providecommand{\SweepTofuAdmitted}{16}
\providecommand{\SweepTofuAttempted}{16}

\providecommand{\SweepTofuImprintMedianLast}{2.9}

\providecommand{\SweepTofuMaxCut}{4{,}096}

\providecommand{\SweepTofuPrefixDepths}{0, 16}

%% file: abstract_kda.tex
An assistant can stop repeating a deleted statement while its recurrent memory still carries that statement's influence. We examine this distinction by saving the state difference immediately after a record, carrying this saved difference, or \emph{receipt}, through later updates, and comparing the corrected state with the state built from the same conversation with the record omitted. Unrolling the recurrence gives an exact criterion: transport reaches this never-stored state if and only if the additional differences created by later updates cancel after transport. We evaluate eleven conditions using 40 prospectively selected synthetic Kimi Linear contexts, with 40 separate calibration contexts and paired uncertainty estimates computed across complete records. Attention masking removed all 40 exact greedy target answers, yet a fixed three-query candidate-scoring attack with known candidate values and log-probability access attained AUC \(0.740\) (95\% interval \(0.695\)--\(0.809\)); the measured false-positive rate was 7.5\%. Prompt-only forgetting still returned 33 of the 40 targets. Checkpoint replay matched the complete declared active state and audit logits in all 80 contexts, while preserving all 120 retained answers in the evaluation cohort. An independent Qwen cohort and matched original-bf16/8-bit Kimi controls confirmed native transport mismatch beyond the numerical error measured in matched controls. Replay also supported record replacement and successive deletions with a new record added between them. These results provide a practical audit of both the requested memory change and the assistant's remaining behavior, with replay work determined by the surviving suffix.

%% file: body_kda.tex
\section{Introduction}

In a case discussion, a physician tells an AI assistant that a patient has a provisional diagnosis of pneumonia, then shares the patient's symptoms and history. Later testing rules out pneumonia, so the physician asks the assistant to forget the earlier diagnosis while retaining follow up discussion. In this synthetic example, the assistant should continue helping with the patient case without treating pneumonia as a current diagnosis. Otherwise, the outdated diagnosis could bias its later summaries or suggestions to the physician. A reply such as ``I do not remember'' is insufficient evidence of forgetting: the assistant could avoid mentioning pneumonia while still retaining the earlier diagnosis and using it to interpret new information. We therefore compare its edited memory with the state the same model
would have built if the provisional diagnosis had been left out. We call this reference
the \emph{never-stored state}.\footnote{Code and source-free audit artifacts: \url{https://github.com/vishrmsh/delta-attention-record-omission/}.}

There is also a privacy question: could someone still learn what was removed?
We call someone trying to recover or detect the deleted information an
\emph{attacker}. They might ask for the old diagnosis directly. If the service
exposes probability scores, they might instead test a diagnosis they already
suspect.

By memory, we mean the running numerical state that a model builds as it reads a conversation at inference time; its pretrained parameters remain fixed. Our main test case is Kimi Linear, a released pretrained model that combines Kimi Delta Attention (KDA) with multi-head latent attention (MLA) \cite{kimilinear}. KDA was important because it helped to make long-context inference cheaper without relying on full attention at every layer. Instead of keeping a separate cache entry for every earlier token, KDA repeatedly folds the conversation into a compact recurrent state whose size does not grow with the conversation. Its channel-wise decay allows different parts of that state to fade at different rates. In Kimi Linear, recurrent KDA layers are interleaved with MLA layers; the original work reported better performance than a matched model using MLA throughout, together with up to 75\% lower KV-cache use and up to \(6\times\) higher decoding throughput at one-million-token context \cite{kimilinear}.

Those gains also create the deletion problem we study. Once a statement has been folded into a shared, evolving recurrent state, it no longer occupies a single token entry that can simply be removed. Kimi Linear is therefore a useful stress test rather than an arbitrary model choice. Its released implementation lets us inspect the recurrent KDA state and the token-associated MLA cache separately. We can mask a statement's MLA entries and ask whether its influence remains in KDA, and we can test what must be recomputed to match a run that never processed the statement. Experiments on additional model families examine how far these findings extend beyond Kimi Linear.

Consider the fictional physician example. Before asking the assistant to forget the provisional pneumonia diagnosis, the physician asks it to ``summarize the case.'' The request does not mention pneumonia, but the model processes it using a state already shaped by that diagnosis. The state produced after this later exchange may therefore differ from the state the model would have built had the diagnosis never appeared. By the time deletion is requested, the diagnosis may have changed both the initial memory and how later messages were processed. Subtracting only its original footprint can thus leave downstream effects behind.

We compare four ways of responding to the deletion request. The system operator can (1) tell the assistant to disregard the diagnosis, (2) hide the attention entries associated with its original statement, (3) subtract a saved numerical footprint of the statement, or (4) rewind to a checkpoint before the statement and replay the surviving conversation. These correspond to \emph{instruction-only forgetting}, \emph{attention masking}, \emph{receipt transport}, and \emph{checkpoint replay}.

In the physician example, imagine running the same model twice: once with the provisional diagnosis and once without it. We save the difference between their memories immediately after the diagnosis statement; we call this difference a \emph{receipt}. We then carry that receipt through later updates and subtract it from the final memory. The difficulty is that the diagnosis may have also changed the way the assistant processed the following symptoms and history. The correction reaches the never-stored state only if those additional differences cancel out at the end. Section~\ref{sec:criterion} gives the exact condition. Replay takes a more direct route: return to a saved state from before the diagnosis, skip that statement, and process the same recorded discussion again. Its cost grows with how much discussion must be processed again.

We evaluate these strategies under three levels of access. The first two
model what an attacker could learn; we use \emph{adversary} as the technical
term for this role. An \emph{extraction adversary} asks for the deleted value
directly. A \emph{candidate-scoring adversary} already has a suspected value
and requests its probability under a fixed set of queries. In the physician
example, this means testing ``pneumonia'' even when the assistant does not say
it. A \emph{privileged auditor}, by contrast, can process the same recorded
discussion with the diagnosis omitted. The auditor compares the resulting
memory and output scores with those of the edited assistant.
Section~\ref{sec:terms} defines these access levels and the reference used by
each comparison.

We evaluate the four strategies on the released 48B Kimi Linear hybrid using synthetic administrative records. Each context contains one target assertion to remove and other information to retain. The experiments measure two different outcomes: whether the assistant still reveals the target and whether its edited memory matches a run that never processed the target statement.

The difference is substantial. In 40 prospectively selected contexts, attention masking prevented exact target extraction in all 40. Nevertheless, an evaluator who already knew a candidate value could still distinguish the masked state from the never-stored reference, with an area under the receiver operating characteristic curve (AUCROC) of \(0.740\), where \(0.5\) denotes chance. A fixed-format instruction to forget was weaker: it still elicited the target value in 33 of 40 contexts. Checkpoint replay, by contrast, matched the never-stored reference on every audited internal-memory array and on the model's full next-token score vector across all 80 evaluation and calibration contexts. All 120 answers about retained records were preserved in each condition. A missing target answer therefore does not by itself show that the underlying memory has been restored.

We also test whether the receipt criterion holds across other recurrent model families and repeat the key Kimi comparisons at matched numerical precision. Beyond deleting a single record, we use replay to make twelve record corrections and to delete records in sequence, with a new record added between deletions. We measure how much checkpoint storage and repeated computation these operations require.

In this work, we make three contributions to the verification of inference-time memory deletion. First, we give an exact criterion for when transporting and subtracting a saved boundary receipt reproduces the never-stored recurrent state, and connect failure of that criterion to how the record changes the processing of later messages. Second, we show that three outcomes can diverge: a model may stop stating the target, a candidate-aware evaluator may still detect its influence, and the internal memory may remain different from the never-stored reference. Third, we evaluate checkpoint replay as an auditable procedure for deletion, replacement, and repeated memory maintenance, and quantify its storage and computation costs. Together, these results tell an operator what each editing procedure changes, what level of access can verify that change, and what the procedure costs.

\section{Terms and scope}\label{sec:terms}

We fix the model, tokenizer, arithmetic, update schedule, and initial state \(M_0\). Here, \(M\) includes all states needed to resume the computation, including relevant caches and position counters. For a history
\(H\), let \(B(H)\) be the state obtained by applying
\begin{equation}
M_t=U(M_{t-1},x_t),\qquad U_x(M):=U(M,x).
\label{eq:general-update}
\end{equation}
Write the recorded token history as \(R=P\Vert x\Vert S\): prefix, target
record, and surviving suffix. For a target spanning tokens
\(x=x_a\cdots x_b\), the never-stored state is
\begin{equation}
\begin{split}
M_T^{\setminus x}&:=U_{x_T}\circ\cdots\circ U_{x_{b+1}}\circ
U_{x_{a-1}}\circ\cdots\circ U_{x_1}(M_0)\\
&=B(R_{\setminus x})=B(P\Vert S).
\end{split}
\label{eq:record-omitted-state}
\end{equation}
The subscript \(T\) in \(M_T^{\setminus x}\) marks the endpoint of the original history; the omitted run processes \(T-(b-a+1)\) tokens. We keep the surviving tokens fixed and recompute their internal
representations from the changed history. Starting at the unaffected prefix
checkpoint, deterministic replay satisfies
\begin{equation}
\operatorname{Replay}(\operatorname{ckpt}(B(P)),S)=B(R_{\setminus x}).
\label{eq:replay-identity}
\end{equation}
A \emph{rebuild} from the initial state reaches the same never-stored state.
We check this identity on the implementation and active arrays described below.
\emph{Causal regeneration}
would also rerun later assistant turns and tool actions; our replay retains
those messages as recorded (Appendix~\ref{app:causal}). Tables~\ref{tab:terms}
and~\ref{tab:scope} define the terms and measurement boundaries.

\begin{table*}[t]
\centering
\caption{Terms used in this paper and the relevant sections.}
\label{tab:terms}
\footnotesize
\setlength{\tabcolsep}{4pt}

\renewcommand{\arraystretch}{1.35} \begin{tabular}{@{}p{0.27\linewidth}p{0.52\linewidth}p{0.15\linewidth}@{}}
\toprule
term & meaning & relevant sections \\
\midrule
target record & the record a user asks to delete; one contiguous span of tokens & all \\
never-stored state & state built from the recorded history with the target left out (Eq.~\ref{eq:record-omitted-state}) & all \\
receipt & the state difference at the boundary just after the record & \S\ref{sec:kimi}, \S\ref{sec:sweep} \\
transport & carrying the receipt forward through the record-present transitions & \S\ref{sec:kimi}, \S\ref{sec:sweep} \\
forcing & new differences created by later updates because the record was present; these are additional to the boundary receipt & \S\ref{sec:kimi}--\S\ref{sec:price} \\
imprint (suffix sweep) & \(\lVert\)present \(-\) never-stored\(\rVert/\lVert\)never-stored\(\rVert\) per layer; residual without an edit & \S\ref{sec:sweep} \\
movement & how much a record's effect changes under two equal-length suffixes & \S\ref{sec:kimi} \\
static receipt & the receipt left unchanged & \S\ref{sec:kimi}--\S\ref{sec:families} \\ diagonal ledger & the receipt scaled by cumulative per-channel decay & \S\ref{sec:kimi}--\S\ref{sec:families} \\ full-matrix receipt & the receipt carried through the full transition product & \S\ref{sec:kimi}--\S\ref{sec:families} \\
frozen-input counterfactual & the never-stored initial state advanced with the record-present inputs; sets forcing to zero by construction & \S\ref{sec:kimi}, \S\ref{sec:families} \\
attempted, qualified & records tried; records meeting the recall threshold (lift \(\ge0.05\) nats) & \S\ref{sec:protocol} \\

\bottomrule
\end{tabular}
\end{table*}

Deletion is requested by the record's owner or an operator under an
authenticated policy, and the memory service identifies the target record and its token span and performs the requested edit. An extraction adversary may
know the method and choose prompts, but cannot forge ownership metadata,
alter checkpoints, or bypass the executor. We evaluate only the specified
scoring and generation probes, not arbitrary adaptive strategies. The
privileged auditor additionally reconstructs the reference and reads the
specified arrays or probability vectors. The scoring adversary receives no
reference state or never-stored evaluation score. Table~\ref{tab:scope} separates
the questions these measurements answer.

\begin{table*}[t]
\centering
\caption{Research questions, comparisons,
evidence, and limits.}
\label{tab:scope}
\footnotesize
\setlength{\tabcolsep}{3pt}
\renewcommand{\arraystretch}{1.35}
\begin{tabular}{@{}p{0.27\linewidth}p{0.25\linewidth}p{0.22\linewidth}p{0.20\linewidth}@{}}
\toprule
question & comparison or test & evidence & outside the claim \\
\midrule
When does receipt transport succeed? & affine recurrence algebra & Theorem~\ref{thm:unrolled-forcing}, Corollary~\ref{cor:separability} & non-affine updates \\
Does native KDA meet the criterion? & frozen-input counterfactual\newline vs.\ native never-stored state & relative state residual;\newline forcing norm ratio & caches outside the recurrence \\
What do the methods cost? & partial recomputation;\newline deferred log vs.\ checkpoint of\newline recurrent and convolution state & residual vs.\ fraction recomputed;\newline storage in bytes & other implementations\newline and storage schemes \\
Is residual information detectable? & probes on each corrected state\newline vs.\ the never-stored reference & lift, rank, greedy, sampling;\newline three-query candidate scoring & untested adaptive attacks;\newline untested generation-only attacks \\
Do next-token distributions match? & exact rebuilt never-stored state & first-token KL divergence;\newline requires probability access & equivalence of full distributions \\
Does replay match the reference? & per-array float32 equality & 94 active arrays (recurrent, convolution, and MLA), offsets, and logits in the prospective cohort (\S\ref{sec:replay}) & prior outputs; unused allocation;\newline downstream causal effects \\
Does the classification transfer? & pre-registered predictions per family & frozen-input transport errors & families not tested \\
\bottomrule
\end{tabular}
\end{table*}

\section{Related Work}\label{sec:related}

Our work on deleting inference-time memory builds on earlier approaches for removing the influence of training data. Sharded, Isolated, Sliced, and Aggregated (SISA) training partitions the data so that only affected shards need to be retrained~\citep{sisa}. \emph{Unlearning at Scale} logs deterministic training state and replays a filtered training tail~\citep{unlearningatscale}. These methods motivate our use of recomputation as a reference for inference-time memory. We also ask whether a saved boundary-state difference can recover that reference through transport alone.

Related work also studies deletion from model parameters and external memory. Weight-space unlearning examines whether pretrained parameters retain information about a target~\citep{tofu,muse,wmdp,relearn,reversibility,thaker,lynch}. External retrieval stores keep records that can be accessed individually~\citep{rag,munkey}. Our study keeps the model weights fixed and examines the model's recurrent and convolution states. Cache pruning and approximate suffix reconstruction~\citep{h2o,snapkv,vericache,kveraser} provide another connection: they evaluate how well a modified cache preserves the model's behavior. We compare the edited state with the state built without the target record, checking exact equality on every array included in the audit.

The architectures we study build on methods that compress history into an evolving state, including linear attention, delta networks, state-space models, and test-time memories~\citep{lina,deltanet,gdn,mamba,titans}. Their update rules provide the basis for our derivation of a deletion criterion. We measure the transport residual in Kimi Linear and Qwen3.5~\citep{kimilinear,qwen35}, and test the write-rule classification on Mamba-2, Falcon-H1, and RWKV-7. We also check replay on each audited state array and evaluate replacing a stored record.

Prior work studies memories organized as individually addressable records, examining what a local edit removes~\citep{svattn} and whether the edit can be applied to a released checkpoint~\citep{gemmamemory}. The addressable-memory study~\citep{gemmamemory} uses \emph{local refitting}, which updates memory by solving a stored optimization problem. In our setting, records are folded into a shared recurrent state and cannot be edited as separate rows. We ask when transporting a saved contribution can recover the never-stored state and how much the evaluated alternatives cost. Recurrence unrolling and checkpoint replay are established techniques~\citep{chen1999linear,unlearningatscale}. We use them to characterize and audit record omission from inference-time recurrent memory, measuring the resulting state, behavior, and resource costs.

\section{The transport criterion}\label{sec:criterion}

To understand the correction, we return to the patient case and follow the model through the physician's conversation twice: once with the provisional pneumonia diagnosis and once without it. Immediately after the diagnosis statement, we save the difference between the two runs' memory states as the boundary receipt \(\Delta_\tau\). Both runs then process the same remaining messages. Because the trajectories
start from different states, those tokens can produce different keys, values,
gates, or step sizes, as in the request to summarize the discussion. We call
the new differences created by these later updates \emph{forcing}. They add to
the difference carried forward from the original receipt.

For one KDA head, write the recurrent matrix as
\(M_t\in\mathbb{R}^{d_v\times d_k}\), with value coordinates in rows.
In this section, \(M_t\) denotes this recurrent matrix; matching it alone
does not establish equality of the model's complete state, which also includes
convolution states and attention caches.
The released update~\citep{kimilinear} first decays the state, then corrects
its prediction of the current value:
\begin{equation}
\begin{split}
\widetilde M_t&=M_{t-1}\operatorname{diag}(g_t),\\
M_t&=\widetilde M_t+\beta_t(v_t-\widetilde M_t k_t)k_t^\top
=M_{t-1}A_t+b_t,\\
A_t&=\operatorname{diag}(g_t)(I-\beta_t k_tk_t^\top),
\qquad b_t=\beta_t v_tk_t^\top.
\end{split}
\label{eq:kda-update}
\end{equation}
Here \(k_t\in\mathbb{R}^{d_k}\) is the normalized key,
\(v_t\in\mathbb{R}^{d_v}\) the value, \(g_t\in(0,1)^{d_k}\) the per-channel
decay, and \(\beta_t\in(0,1)\) the write gate. These inputs come from the
current hidden representation. The transition \(A_t\) carries old state and
\(b_t\) writes new content. The MLX kernel and our captured-input recurrence
use this same order (Appendix~\ref{app:family-protocol} gives the other families).

A scalar example of a general affine recurrence uses the same notation: \(M_t=M_{t-1}A_t+b_t\),
where \(A_t\) multiplies the old state and \(b_t\) adds the new write.
As an illustrative example, let the boundary states be \(M_\tau^+=2\) and
\(M_\tau^-=1\), so the receipt is \(1\). If both later transitions are
\(A^+=A^-=1\), but the later writes are \(b^+=1\) and \(b^-=0\), the final
states are \(3\) and \(1\). Subtracting the transported receipt leaves \(2\),
missing the never-stored state by \(F=1\). Holding the later write fixed
would remove this new difference, but would define a different counterfactual.
The theorem accounts for changes in both transitions and writes.

\begin{theorem}[Boundary transport criterion]\label{thm:unrolled-forcing}
In exact arithmetic, let record-present and never-stored trajectories obey
\(M_t^\pm=M_{t-1}^\pm A_t^\pm+b_t^\pm\) in common row-vector coordinates, and
let \(\Delta_t=M_t^+-M_t^-\). Define
\[
\begin{gathered}
F_t:=M_{t-1}^-(A_t^+-A_t^-)+(b_t^+-b_t^-),\\
\Phi_{u:v}:=A_u^+\cdots A_v^+.
\end{gathered}
\]
with an empty product equal to \(I\). Then
\begin{equation}
\Delta_t=\Delta_{t-1}A_t^+ + F_t
\label{eq:forcing-decomposition}
\end{equation}
and
\begin{equation}
\Delta_T=\Delta_\tau\Phi_{\tau+1:T}
+\sum_{s=\tau+1}^{T}F_s\Phi_{s+1:T}.
\label{eq:unrolled-forcing}
\end{equation}
Subtracting the transported boundary receipt
\(\Delta_\tau\Phi_{\tau+1:T}\) reaches \(M_T^-\) if and only if the accumulated
forcing sum is zero.
\end{theorem}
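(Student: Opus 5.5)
The plan is to derive the one-step identity \eqref{eq:forcing-decomposition} by direct algebra, unroll it by induction on \(t\), and then read off the if-and-only-if from the unrolled formula.

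\textbf{Step 1 (one-step identity).} Subtract the two recurrences:
\[
\Delta_t=M_{t-1}^+A_t^+-M_{t-1}^-A_t^-+(b_t^+-b_t^-).
\]
Add and subtract \(M_{t-1}^-A_t^+\), and group \((M_{t-1}^+-M_{t-1}^-)A_t^+=\Delta_{t-1}A_t^+\). What remains is \(M_{t-1}^-(A_t^+-A_t^-)+(b_t^+-b_t^-)=F_t\), which gives \eqref{eq:forcing-decomposition}. The only care needed is that matrices multiply on the right in row-vector coordinates, so the order of the factors must be kept.

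\textbf{Step 2 (unrolling).} Prove \eqref{eq:unrolled-forcing} by induction on \(T\ge\tau\).
\begin{itemize}
\item Base case \(T=\tau\): the product \(\Phi_{\tau+1:\tau}\) is empty, hence equal to \(I\), and the sum is empty, so the formula reduces to \(\Delta_\tau=\Delta_\tau\).
\item Inductive step: assume the formula at \(T-1\) and substitute it into \(\Delta_T=\Delta_{T-1}A_T^++F_T\). Use \(\Phi_{u:T-1}A_T^+=\Phi_{u:T}\) for every \(u\). The new term \(F_T\) equals \(F_T\Phi_{T+1:T}\), because that product is empty and equals \(I\). Together these give the formula at \(T\).
\end{itemize}

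\textbf{Step 3 (criterion).} Rewrite \eqref{eq:unrolled-forcing} as
\[
M_T^+-\Delta_\tau\Phi_{\tau+1:T}-M_T^-=\sum_{s}F_s\Phi_{s+1:T}.
\]
The corrected state therefore equals \(M_T^-\) exactly when the accumulated forcing sum \(\sum_s F_s\Phi_{s+1:T}\) vanishes. This is the stated equivalence, with "accumulated forcing sum" read as the transported sum rather than the individual \(F_s\).

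I expect no real obstacle beyond this interpretive point. The one thing to flag is that the sum can be zero even when individual \(F_s\) are nonzero, since the transports \(\Phi_{s+1:T}\) may be singular or the terms may cancel. So the criterion is zero net transported forcing, not zero forcing at each step. The exact-arithmetic hypothesis is what allows the add-and-subtract and the regrouping to hold without rounding terms.
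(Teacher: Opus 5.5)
Your proposal is correct and follows the paper's proof: you subtract the recurrences and add and subtract $M_{t-1}^-A_t^+$ to get the one-step identity, unroll it (your induction spells out the paper's unrolling, including the empty-product conventions), and subtract the transported receipt to obtain the iff. Your reading of the ``accumulated forcing sum'' as the transported sum $\sum_s F_s\Phi_{s+1:T}$ also matches how the paper uses the criterion; that sum can vanish through cancellation even when individual $F_s$ are nonzero.
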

\begin{proof}
Subtract the affine recurrences and add and subtract
\(M_{t-1}^-A_t^+\) to obtain Equation~\ref{eq:forcing-decomposition}.
Unrolling the linear recurrence gives Equation~\ref{eq:unrolled-forcing};
subtracting its first term gives the iff.
\end{proof}

Unrolling a linear recurrence is standard~\citep{chen1999linear}; we use it
here as an exact deletion test for a saved receipt. Floating-point evaluation
can introduce rounding discrepancies, which we check separately in the
implementation audits. The only-if direction
identifies the residual left by subtracting the specified transported receipt.
It is not a lower bound over all deletion algorithms or preprocessing and
storage schemes. The if direction also matters for the experiments:
observing \((A_t^+,b_t^+)\ne(A_t^-,b_t^-)\) does not establish that transport
fails, because individually large forcing vectors can cancel. We therefore
measure the residual using the states, transitions, and writes from both
runs: in the patient example, the conversation with and without the provisional pneumonia diagnosis. Checking this criterion requires computing the reference run; the criterion alone does not provide a shortcut around that computation.

\begin{corollary}[Write-rule classification]\label{cor:separability}
When accumulated forcing is zero:
\begin{enumerate}
\item additive writes (\(A_t=I\)) preserve a static receipt;
\item per-channel decay (\(A_t=\operatorname{diag}(g_t)\)) transports a receipt
with a diagonal decay ledger;
\item a delta rule
\(A_t=\operatorname{diag}(g_t)(I-\beta_tk_tk_t^\top)\) gives an ordered
suffix product that is generally non-diagonal. A per-channel decay ledger
cannot represent this transport for arbitrary receipts. The product can be
applied sequentially without forming a dense matrix; subtracting the
transported receipt reaches the never-stored state exactly when the
theorem's accumulated forcing is zero.
\end{enumerate}
\end{corollary}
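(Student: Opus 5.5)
The plan is to specialize Theorem~\ref{thm:unrolled-forcing} to each write rule. Assuming zero accumulated forcing, \(\sum_{s=\tau+1}^{T}F_s\Phi_{s+1:T}=0\), Equation~\ref{eq:unrolled-forcing} reduces to \(\Delta_T=\Delta_\tau\Phi_{\tau+1:T}\). Everything then depends on the form of the record-present transport product \(\Phi_{\tau+1:T}=A_{\tau+1}^+\cdots A_T^+\). In each case I would compute this product and read off what has to be stored or applied. Subtracting that object from \(M_T^+\) gives \(M_T^-\) by the iff in the theorem.

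For case~1, \(A_t^+=I\) for every \(t\), so \(\Phi_{\tau+1:T}=I\) and \(\Delta_T=\Delta_\tau\). The unchanged (static) receipt is therefore the exact correction. For case~2, each \(A_t^+=\operatorname{diag}(g_t^+)\), and diagonal matrices multiply entrywise. The product is \(\operatorname{diag}(\prod_{s}g_s^+)\), so the transported receipt is \(\Delta_\tau\) with column \(j\) scaled by the cumulative decay \(\prod_{s=\tau+1}^{T}g_{s,j}^+\). This is the diagonal ledger. Only one \(d_k\)-vector of cumulative decays has to be maintained, updated multiplicatively each step.

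For case~3, the exact-reach statement again follows directly from the theorem, since \(\Delta_\tau\Phi_{\tau+1:T}\) is the correction. Sequential application is also routine. Compute \(R\leftarrow R\,A_s^+=(R\operatorname{diag}(g_s))-\beta_s(R\operatorname{diag}(g_s)k_s)k_s^\top\) for \(s=\tau+1,\dots,T\). Each step is a rank-one update costing \(O(d_vd_k)\), and no dense \(d_k\times d_k\) matrix is ever formed.

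The main obstacle is making precise the claim that a per-channel ledger ``cannot represent this transport for arbitrary receipts.'' I would read it as follows: in general there is no diagonal \(D\) with \(\Delta\Phi=\Delta D\) for all \(\Delta\). If such a \(D\) existed, taking \(\Delta\) over the standard basis rows would force \(\Phi=D\). So it suffices to exhibit one admissible step whose product is non-diagonal. Take a single step with \(g=\mathbf 1\) (or any \(g\in(0,1)^{d_k}\)), \(\beta\in(0,1)\), and a normalized \(k\) having two nonzero coordinates, for example \(k=(e_1+e_2)/\sqrt2\). Then \(A=\operatorname{diag}(g)(I-\beta kk^\top)\) has off-diagonal entry \(-g_1\beta/2\neq0\). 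One such step, padded with identity-like steps or taken with \(T=\tau+1\), gives the counterexample. I would remark that \emph{generally} refers to generic keys: the product is diagonal only in degenerate cases such as keys aligned with coordinate axes.
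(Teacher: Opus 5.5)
Your proposal is correct and follows the paper's own argument. You set the forcing sum to zero in Theorem~\ref{thm:unrolled-forcing}, read off \(\Phi_{\tau+1:T}\) for each write rule (the identity, a diagonal of cumulative decays, and a generally non-diagonal product of diagonal-minus-rank-one factors), and take the exact-reach clause from the theorem's iff. Your standard-basis argument and the explicit off-diagonal entry \(-g_1\beta/2\) make precise what the paper shows with its two-channel example \((1,0)\mapsto(3/8,-1/8)\), and your update \(R\leftarrow R\operatorname{diag}(g_s)-\beta_s(R\operatorname{diag}(g_s)k_s)k_s^\top\) spells out the sequential-application claim, which the paper states without detail.
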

\begin{proof}[Argument]
Substitution gives an identity product for additive writes and a diagonal
product for decay. The dense rank-one delta term is generally non-diagonal.
Theorem~\ref{thm:unrolled-forcing} supplies the separate forcing condition.
\end{proof}

A two-channel example shows why tracking each channel’s decay is enough for decay updates, but not for updates that mix information across channels. Two decay steps transport \(\Delta_\tau=(d_1,d_2)\) by
multiplying each coordinate by its cumulative decay, so a diagonal ledger
suffices to transport the receipt. One delta step introduces cross-channel
mixing: with \(g=(1/2,1/2)\), \(\beta=1/2\), and
\(k=(1,1)/\sqrt{2}\), the receipt \((1,0)\) becomes \((3/8,-1/8)\), and no
diagonal ledger can create the second coordinate because the suffix key has
mixed the channels. Full matrix transport handles that mixing, and the theorem
identifies the remaining forcing after it does.

We report state-normalized residuals and forcing norm ratios, using the Frobenius norm for matrices
(the Euclidean norm for the row-vector examples). Here, a norm measures the size of a state or a difference between states. These ratios are defined
only for nonzero denominators; when a denominator is zero, the normalized
ratio is undefined, while the absolute residual remains well defined.
The suffix sweep divides by the never-stored state norm; the fixed-length
audit uses the record-present suffix-A state norm. The
\emph{forcing norm ratio} instead divides the accumulated forcing norm by
\(\lVert\Delta_T\rVert\), the uncorrected state difference. It compares
magnitudes, not an additive fraction of influence. For example, a transported
receipt \((3,0)\) and accumulated forcing \((-2,0)\) sum to \((1,0)\).
The forcing norm ratio is then \(2\), because the two terms partly cancel.
A value near one means the residual and the original difference have similar
norms (Appendix~\ref{app:residual-ratio}).

\section{Models, corpora, and audit design}\label{sec:protocol}

Kimi Linear interleaves 20 recurrent KDA layers with 7 global
multi-head latent attention layers~\citep{kimilinear,deepseekv2,deltanet,gdn}.
A record can therefore affect both the recurrent/convolution state and the
attention caches. We ran inference under Apple's MLX framework on one Mac
Studio. The mechanism audits as well as the prospective Kimi cohort use the released \path{mlx-community/Kimi-Linear-48B-A3B-Instruct-8bit} weights. A separate precision control compares the original bf16 weights with a matched 8-bit quantization derived from them (Appendix~\ref{app:precision-control}).
Within each comparison, the model, tokenizer, arithmetic, retained tokens and
execution schedule are fixed. Full receipt transport modifies only the recurrent matrices; all other state components remain as they were after processing the target record.

The mechanism experiments examine a dense range of suffix lengths. They use three corpus types: synthetic patient records keyed by distinctive ward codes; public TOFU question–answer facts about fictitious authors, such as an author’s biographical or literary attributes~\citep{tofu}; and structured tables and clinical notes from the credentialed MIMIC-IV-Ext-CDS dataset. The patient case motivates selective forgetting. (In our experiments, distinctive ward codes provide easily scored targets for testing whether information remains recoverable after deletion. The synthetic records do not evaluate clinical reasoning.) Appendix~\ref{app:sweep-protocol} describes the corpus construction, layouts, and sampling rules. We include a combination of a record and its context in the mechanism audit only when the model demonstrably recalls the target. The target’s mean log probability per token must be at least \(0.05\) nats higher when the record is present than when it is omitted. For short categorical targets, the first target token must also rank among the model’s top 10 predictions when the record is present. Appendix Table~\ref{tab:audit-denominators} reports the number of configurations tested and the number that met this recall criterion. To determine which part of the KDA update makes a record’s effect depend on later context, we evaluate three progressively richer update rules: additive writes alone, additive writes with decay, and the full delta update. For each rule, we process the same record followed by two different suffixes of equal length, then compare the record’s resulting effects on the state. We next evaluate receipt transport against two reference states. The frozen-input counterfactual holds the later update inputs fixed, testing whether the transport calculation itself is correct. The native never-stored trajectory processes the conversation without the target record, testing whether transport recovers the state that the model would actually have produced had the record been omitted.

In the suffix sweep, a target follows a preamble and \(P\) prefix records.
We ingest the surviving suffix in chunks ending at
\(0,16,32,\ldots,\SweepSynMaxCut\), capturing \(k_t,v_t,g_t,\beta_t\) on all
20 KDA layers in both trajectories. We carry the forcing decomposition
forward and restart it from each cut to measure the effect of recomputing an
initial suffix segment. At each cut, we record four outcomes: the teacher-forced increase in the target’s mean log probability per token relative to the never-stored reference; the rank of the target’s first token; the Kullback–Leibler (KL) divergence from the never-stored first-token distribution; and greedily generated answers for the public-corpus queries.
At ingestion and the longest cut we also draw \SampSynK{} continuations per
condition at temperature one. These configurations share corpus-specific
suffix pools and are not independent records; their layouts and sampling
rules are given in Appendix~\ref{app:sweep-protocol}.

The prospective cohort measures answer suppression and whether residual target information remains detectable using three prespecified candidate-scoring queries, without access to the model’s internal state or the never-stored reference. Before evaluating the model, we froze
40 evaluation contexts and 40 disjoint calibration contexts spanning 6
synthetic administrative domains. Each contains 1 target, 3 retained
records before it, and either 128 or 1,024 suffix tokens. Using the terminology in Table~\ref{tab:terms}, we evaluate 11 experimental conditions. Two provide reference states: the original state with the record present and the never-stored state. Five apply an edit: attention masking, static receipt subtraction, diagonal-ledger correction, full receipt transport, and attention masking combined with transport. Checkpoint replay reconstructs the state from a saved checkpoint, while an independent never-stored rebuild checks the reference implementation. The final two conditions compare an instruction to forget the record with a neutral instruction of the same length. Within each context, the prompt-forgetting instruction and the neutral instruction have the same token count, controlling for the effect of additional context length. Qualification is measured after the selected suffix using the same
lift and rank thresholds, but every preselected context remains in the
primary analysis regardless of qualification. All 40 evaluation contexts met the prespecified recall criterion. We completed the planned runs for all 40 evaluation and 40 calibration contexts, with no contexts lost to execution failure.

For each context in the prospective cohort, we use 3 fixed prompts to score the target, 3 queries to test retention of the other records, and 2 reserved continuation passages that do not contain the target information. We generate answers to the direct target prompts and retained record queries using greedy decoding, with a maximum of 24 tokens per answer. We count an answer as exact only if it matches the complete target after removing leading and trailing whitespace and ignoring capitalization.

To test whether target information remains detectable, the candidate scoring adversary receives the candidate target and three fixed prompts. It makes one scoring call for each prompt and averages the target’s log probability per token across the three calls. The adversary cannot inspect the model’s internal state or compare the evaluation context with its never-stored reference.

We select the detection threshold using only scores from the never-stored calibration contexts, targeting a 5\% false positive rate (FPR). We then apply this fixed threshold to the evaluation contexts and report the observed FPR. We calculate paired 95\% uncertainty intervals by resampling complete contexts 2,000 times. Each resample keeps all conditions and prompts from a context together. For the detection intervals, each resample also draws new calibration contexts and recalculates the threshold. Appendix~\ref{app:prospective} provides the complete decision rule, results tables, and additional uncertainty analyses for the small cohort.

In the prospective Kimi cohort, we compare checkpoint replay with an independently implemented rebuild of the never-stored state. For each context, we compare 80 recurrent and convolution arrays, 14 active MLA key and value arrays, seven cache offsets, and the complete vector of audit logits. A match requires identical array shapes and a maximum absolute difference of zero after conversion to float32. Cache offsets must also match exactly. We compare only the active cache contents and exclude unused buffer capacity.

To implement attention masking, we assign the target token positions the minimum attention score supported by the native fused kernel. As an implementation check, the kernel produces exactly the same output as the released layer when no positions are masked. In the mechanism experiments, the replay audit compares the measured logits and all 80 KDA arrays. A separate subset of the suffix sweep also compares the active MLA key and value arrays directly.

To test whether the write rule classification extends to other recurrent architectures, we use pure PyTorch reference implementations of Mamba-2 (\path{AntonV/mamba2-1.3b-hf}), Falcon-H1 (\path{tiiuae/Falcon-H1-1.5B-Base}), and RWKV-7 (\path{Hakureirm/rwkv7-1.5b-hf}). We recorded the expected behavior of each family before running the models (Appendix~\ref{app:family-protocol}). We also evaluate an independent Qwen3.5-4B cohort containing 32 records, each followed by both a short and a long suffix. We inspect transport at three selected recurrent layers and verify replay across every active layer (Appendix~\ref{app:qwen-cohort}). Finally, eight preselected Kimi contexts compare the original bf16 and matched 8-bit models with their respective reference states. Twelve additional contexts test record replacement and successive deletions with a new record added between them (Appendices~\ref{app:precision-control} and~\ref{app:maintenance}).

\input{kda_results}

\section{Discussion}\label{sec:discussion}

An assistant can stop repeating a deleted fact while that fact still affects
its predictions. In the prospective cohort, the information to be
deleted was a distinctive code in a synthetic administrative record, such as
a shipping or laboratory record (Appendix~\ref{app:prospective}). After masking, the assistant no longer returned the exact code when asked
directly. Yet its probability scores still helped an attacker test whether a
proposed code had appeared in the conversation
(Table~\ref{tab:kda-cohort-main}). This test requires a candidate code and
access to probability scores. It does not show recovery of an unknown code
through ordinary chat.

Adding receipt transport to masking substantially reduced the remaining
difference in scores. Replay matched the never-stored reference on both the
audited state and measured behavior (Table~\ref{tab:kda-cohort-main};
Section~\ref{sec:replay}). Testing the same records under the same
query budget separates three outcomes: suppressing the direct answer,
reducing detectable information, and matching the reference state. For a
service that exposes continuation scores, the first outcome alone is not
enough to establish that the information has been removed.

A record can change the memory formed when it is read and how the model
processes later messages. A boundary receipt captures the state
difference at the end of the record. Transport carries that contribution
forward, but later updates can create additional differences that it does
not capture. The independent Qwen cohort showed mismatch beyond the measured
numerical error (Appendix~\ref{app:qwen-cohort}). The matched Kimi test found
the same pattern using the same verified weights at two precisions
(Appendix~\ref{app:precision-control}). The transport criterion accounts for
cancellation between effects. It applies to the specified receipt corrections,
not to every possible deletion method. Our matrix
interventions leave convolution state and, in hybrid models, attention
caches unchanged. Separating the contributions of those components remains a task for future
work.

Replay gives the operator a concrete way to implement and check the requested
change: return to an earlier checkpoint and process the retained or corrected
records again.
In our audits, replay matched the appropriate reference after single
deletions and record corrections. It matched the reference through the
tested deletion sequences, with a new record added between deletions
(Section~\ref{sec:replay}; Appendix~\ref{app:maintenance},
Table~\ref{tab:kda-maintenance-cohort}). All retained answers in the main prospective
cohort were preserved exactly. The practical cost is how much of the
conversation must be processed again. Saving checkpoints more often reduces
that work but requires more storage. The replay timings and checkpoint-storage comparison help quantify this
trade-off (Figure~\ref{fig:kda-replay}; Appendix~\ref{app:cadence-design}).

We tested one specific scoring attack. Our
statistical conclusions concern fixed synthetic contexts, known candidate
values, and access to the model's probability scores. They do not cover arbitrary prompting,
relearning, or attacks limited to generated answers. With only 40 evaluation contexts,
we cannot estimate low false-positive rates precisely
(Appendix Table~\ref{tab:kda-prospective-attack}).

Preserving the tested answers does not establish that all other uses of the
model are unaffected. Our utility checks covered retained records before the
target and continuations from six synthetic passages. The retained answers
were preserved exactly, but the continuation scores describe only that small
passage bank (Appendix~\ref{app:prospective},
Table~\ref{tab:kda-prospective-cohort}). The clinical-data checks remained local and restricted to
credentialed access, with only aggregate results reported.

A deletion request must also specify what happens to later copies of the
information. For example, suppose an incident-response assistant is asked to
remove a temporary credential. If a later message quotes that credential and
remains in the transcript, replay will process it again. Replay uses the
surviving messages as recorded; it does not rewrite the later conversation
or undo earlier tool actions. Comparing the complete active state lets us audit the memory change within
this boundary (Appendix~\ref{app:causal}). It does not establish that every
consequence of the information has been reversed.

Future work should extend the audits to larger and more varied conversations.
Adaptive attacks could test whether changing prompts in response to earlier
results reveals more information. Attacks that use only generated answers
would test what can be learned without access to probability scores. Broader task evaluations
should assess whether retained information remains useful. Handling later
copies and choosing checkpoint schedules for realistic workloads are further
deployment questions.

The analytical contribution is a way to diagnose receipt-based corrections.
The criterion states exactly when transport reaches the never-stored
recurrent state. The controlled experiments distinguish incorrect transport
from later changes that a correctly transported receipt does not remove
(Table~\ref{tab:kda-controls}). This gives developers a more precise
explanation of failure than observing whether a target answer disappeared
or remained.

The empirical contribution is a set of complementary checks on what an edit
achieves. Direct answers, candidate scores, and state comparisons support
different conclusions. Keeping those conclusions separate helps avoid
treating answer suppression as proof of state restoration. Replay supplies an
audited baseline in the tested Kimi and Qwen settings. The correction and
maintenance tests show how the same approach can check changes beyond a
single deletion (Appendix Table~\ref{tab:kda-maintenance-cohort}).
Retained-record checks and checkpoint measurements address two practical
questions: what information remains usable and what the edit costs.

For long-running assistants, these checks could make memory changes more
transparent to users and operators. A request to forget or correct a fact can
specify what should be removed, what should remain, and how success will be
checked. This points toward assistants whose memories can be corrected and
audited as conversations evolve.

\section{Conclusion}\label{sec:conclusion}

Exact record omission provides a concrete reference for checking an
assistant's memory after deletion. We characterize when a transported
boundary receipt reaches that reference and measure the residual created by
later updates. In the prospective Kimi cohort, attention masking suppressed
all exact target answers while three candidate-scoring queries still
distinguished masked from never-stored memory. Checkpoint replay matched the
declared reference in the Kimi and Qwen audits and supported record correction
and sequential maintenance. Together, the criterion, behavioral probes, and
state comparisons provide an audit of what a memory edit achieves. The
measured storage and replay costs give operators a basis for choosing a
checkpoint interval to meet their deletion requirements.\label{main-text-end}

\FloatBarrier

%% file: kda_results.tex
\section{Results}\label{sec:results}

\subsection{Later context changes a record's recurrent effect}\label{sec:kimi}

The later conversation changes a record's effect on memory. To identify
which part of the update causes that dependence,
we constructed the KDA recurrence one term at a time and compared a single
record's effect under two equal-length suffixes. The movement
\[
\frac{\lVert\Delta S_A-\Delta S_B\rVert_F}
{\max(\lVert\Delta S_A\rVert_F,\lVert\Delta S_B\rVert_F)}
\]
(defined as zero when both effects are zero) measures how much a record's
stored effect changes when only its context changes. Each term behaved as
Corollary~\ref{cor:separability} predicts. Additive writes were
suffix-independent to numerical noise, consistent with a static receipt.
Decay introduced up to \(10.9\%\) movement, which we reduced to numerical
noise by applying its diagonal ledger. Adding the delta rule produced
\(12\)--\(49\%\) movement, and the same ledger reduced it only to
\(8\)--\(49\%\), as expected from the cross-channel mixing in the corollary.
Both endpoints of these pooled ranges come from the
MIMIC corpora (Appendix Table~\ref{tab:kda-separability-full}).

\subsection{Transport corrects frozen inputs but leaves a native residual}\label{sec:control}

The preceding experiment shows that later context can change the receipt; it
does not tell us whether transporting that receipt is sufficient. We therefore
ran the correction twice. First, we reused the transitions and writes from the
run with the record present, setting the forcing to zero by construction.
The transported receipt matched the frozen-input record effect to
\(6.1\times10^{-6}\) maximum relative error. We then compared the same
transport with the native never-stored state. Across the six configurations
reported in aggregate, three probed layers, and two suffixes (36
evaluations of about 180 suffix tokens), the record's imprint was
\(9.6\)--\(33.4\%\) (median \(15.0\%\)); subtracting the full-matrix receipt
left \(0.7\)--\(32.9\%\) (median \(12.6\%\)). Both ranges are
normalized by the record-present suffix-A state norm. The forcing norm ratio, which divides the residual norm by the norm of the original state difference between the record-present and never-stored trajectories, ranged from \(0.064\) to \(1.010\) with median \(0.836\). None of the 36 evaluations was zero, so transport failed the exact criterion
in each case, although the size of the error varied with the configuration.
The later keys,
values, gates, and step sizes differed between the two trajectories by median
relative amounts from \(0.025\) (gates) to \(0.246\) (values). The same later
tokens therefore supplied different inputs to the updates. The measured state difference agreed with the sum of the transported
receipt and accumulated forcing to numerical precision. This check attributes
the residual to the changed later updates rather than to an error in
transporting the original receipt.
Table~\ref{tab:kda-controls} summarizes these controls and the readout
interventions. Comparisons use the native never-stored reference, except
for the frozen-input test. Native percentages use the record-present suffix-A
state norm; frozen-input relative error uses the larger effect/receipt norm. The ward-code
result is a single synthetic example; the other rows summarize the
corresponding audits.

\begin{table*}[t]
\centering
\caption{State and readout interventions in the 48B Kimi Linear hybrid.}
\label{tab:kda-controls}
\footnotesize
\setlength{\tabcolsep}{3pt}
\begin{tabular}{@{}p{0.21\linewidth}p{0.33\linewidth}p{0.36\linewidth}@{}}
\toprule
intervention & intervention setup & observed result \\
\midrule
zero recurrent matrices & leave MLA caches and convolution arrays unchanged &
in one synthetic example, the ward code remains rank 1; first-token probability \(0.881\) \\
attention masking & hide target attention entries; leave recurrent and convolution state unchanged &
nonzero first-token KL on all 216 synthetic and 144 TOFU configuration--cut pairs \\
frozen-input transport & identical suffix transitions and writes &
receipt matches the frozen-input record effect to \(6.1\times10^{-6}\) relative error \\
native transport & subtract the transported boundary receipt without correcting later forcing &
residual \(0.7\)--\(32.9\%\) of record-present suffix-A state norm; forcing norm ratio \(0.064\)--\(1.010\) \\
checkpoint replay & restore the preceding checkpoint and process the retained suffix again &
zero residual on logits and 80 KDA arrays \\
\bottomrule
\end{tabular}
\end{table*}

\subsection{State residuals across suffix lengths}\label{sec:sweep}

We next measured how the residual changes from \(0\) to \(\SweepSynMaxCut\)
suffix tokens (Figure~\ref{fig:kda-sweep}). All \(\SweepSynAdmitted\) of
\(\SweepSynAttempted\) synthetic configurations and all
\(\SweepTofuAdmitted\) of \(\SweepTofuAttempted\) TOFU configurations met the
recall criterion. The synthetic configurations combine \(\SweepSynVictims\)
target records with prefix depths of \SweepSynPrefixDepths{} records.
At every cut, the measured state difference agreed with the sum of the
transported receipt and accumulated forcing within
\(\SweepSynDecompErrMax\) relative error. Replay matched the never-stored
state exactly (maximum state difference \(\SweepSynReplayMaxDiff\)).

The record's imprint decreased as the conversation continued, but remained
nonzero over the suffix lengths we tested. Immediately after
ingestion the record-present and never-stored states differed by
\(\SweepSynImprintMedianZero\%\) of the never-stored norm (median over the 20
KDA layers and all configurations). After \(\SweepSynMaxCut\) suffix tokens the
difference was still \(\SweepSynImprintMedianLast\%\), ranging from
\(\SweepSynImprintMinLast\%\) to \(\SweepSynImprintMaxLast\%\) across layers
and configurations, and the TOFU records behaved the same way
(\(\SweepTofuImprintMedianLast\%\) at \(\SweepTofuMaxCut\) tokens). None of the receipt classes removed the imprint across this sweep. At the
longest suffix, subtracting the transported
receipt left \(\SweepSynFullResidualMedianLast\%\) of the never-stored state norm, the
diagonal ledger left \(\SweepSynLedgerResidualMedianLast\%\), and the static
receipt left \(\SweepSynStaticResidualMedianLast\%\), which is larger than the
\(\SweepSynImprintMedianLast\%\) left by doing nothing. The static receipt
still describes the difference at the record boundary, even though later
updates have changed that difference. Subtracting it late in this conversation
moved the state further from the reference than leaving the imprint in place.

The forcing norm ratio increased with suffix length, from a median of
\(\SweepSynForcingShareMedianFirst\) after 16 tokens to
\(\SweepSynForcingShareMedianLast\) at the longest suffix. Individual layers
span \(\SweepSynForcingShareMinLast\)--\(\SweepSynForcingShareMaxLast\): the
zero comes from unchanged inputs to the first layer, while ratios above one
reflect cancellation (Section~\ref{sec:criterion}).

\begin{figure*}[t]
\centering
\includegraphics[alt={Across suffix lengths from 16 to 4096 tokens on the Kimi hybrid, the record's recurrent imprint remains nonzero after each tested receipt correction, the forcing norm ratio rises toward one, and the residual norm decreases approximately linearly with the fraction of the suffix recomputed.},width=\linewidth]{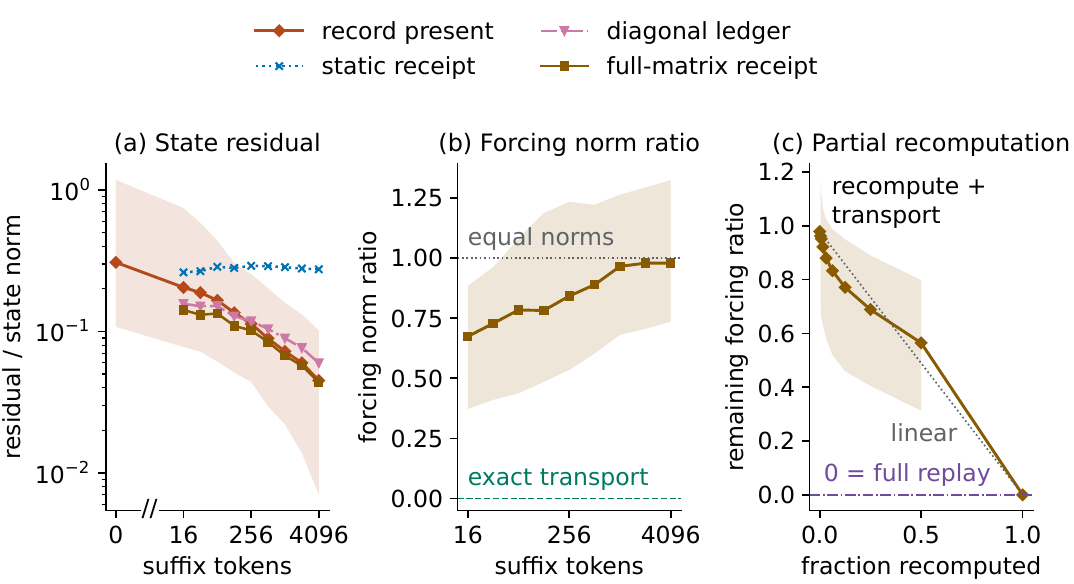}
\caption{Suffix sweep on 24 synthetic configurations and 20 KDA layers.
(a) Recurrent-state residual divided by the never-stored state norm; the
zero-token cut is separated from the positive log scale.
(b) Accumulated forcing norm divided by the norm of the original state
difference between the record-present and never-stored trajectories.
(c) Remaining recurrent-state mismatch norm after
recomputing an initial suffix segment and transporting through the rest,
divided by the same original state-difference norm. The dotted line is a linear
reference. Points are medians across layers and configurations; bands span
the 10th--90th percentiles, not confidence intervals.}
\label{fig:kda-sweep}
\end{figure*}

\subsection{Partial recomputation and deferred-transport costs}\label{sec:price}

To determine whether a shorter replay could remove most of the residual, we
recomputed the first \(k\) suffix tokens exactly from the checkpoint before the
record and transported the remaining difference. Recomputing the first eighth
of the suffix left
a residual norm equal to \(\SweepSynTruncEighthMedian\) times the norm of the original
state difference between the record-present and never-stored trajectories; a quarter left
\(\SweepSynTruncQuarterMedian\), and half left \(\SweepSynTruncHalfMedian\)
(medians over layers and configurations; Figure~\ref{fig:kda-sweep}c). The
ratios describe the size of the remaining state mismatch, not the fraction
of target information recoverable. The curve lies close to the straight line
between the full-matrix residual and zero. Recomputing the beginning of the suffix removes the forcing created
there, but the correction must still account for the later part. At the
evaluated cut points, a short initial replay did not remove most of the
residual.

Transport also requires information about the intervening updates. Our
implementation saves each token's transition inputs (\(k_t\), \(g_t\),
and \(\beta_t\) for each of the 20 layers) during ingestion and uses them
to compute the transported receipt when it is requested after \(L\) suffix tokens.
In the released bf16 arithmetic this log costs
\(\SweepSynLogKiBPerToken\) KiB per token, about
\(\SweepSynLogMiBPerThousandTokens\) MiB per thousand tokens, whereas a
complete checkpoint of the recurrent and convolution state costs
\(\SweepSynCheckpointMiB\) MiB regardless of context length. This checkpoint
occupies less storage than the transition log of about
\(\SweepSynLogTokensPerCheckpoint\) tokens. This comparison counts the recurrent
and convolution components; total storage would also need to include the
hybrid's other memory. Maintaining receipts online or storing the transitions
in another form could change these costs, but we did not test those choices.
Prefill ran
at \(\SweepSynPrefillMsPerToken\) ms per token and the stacked transport of all
20 layers at \(\SweepSynTransportMsPerToken\) ms per token on the Mac Studio,
so deferring transport in this implementation was slower per token than
recomputing the suffix.

\subsection{Masking reduces recovery but leaves a distribution difference}\label{sec:channels}

The suffix sweep separates state correction from answer suppression.
All three receipt corrections left a median synthetic target
lift near 2.2 nats per token and greedy recovery in every configuration
(Figure~\ref{fig:kda-leakage}).
Masking sharply reduced recovery, including the sampled secret-hit counts,
while leaving nonzero first-token KL on all 216 synthetic and 144 TOFU
configuration--cut pairs. The synthetic median KL was \(0.052\) nats;
these comparisons require the privileged auditor's probability vectors
(Appendix Tables~\ref{tab:kda-behavior}
and~\ref{tab:kda-sampling}). The prospective Kimi cohort tests whether a fixed
candidate-scoring attack can detect residual information after greedy
recovery has stopped.

\begin{figure*}[t]
\centering
\includegraphics[alt={Scatter of target-secret probe lift against recurrent-state residual across synthetic and TOFU configurations and cuts. Recurrent-matrix corrections remain near the record-present scores. Attention masking lowers scores in many cases but includes positive lifts; zero denotes agreement only on this target-secret score, not the output distribution.},width=\linewidth]{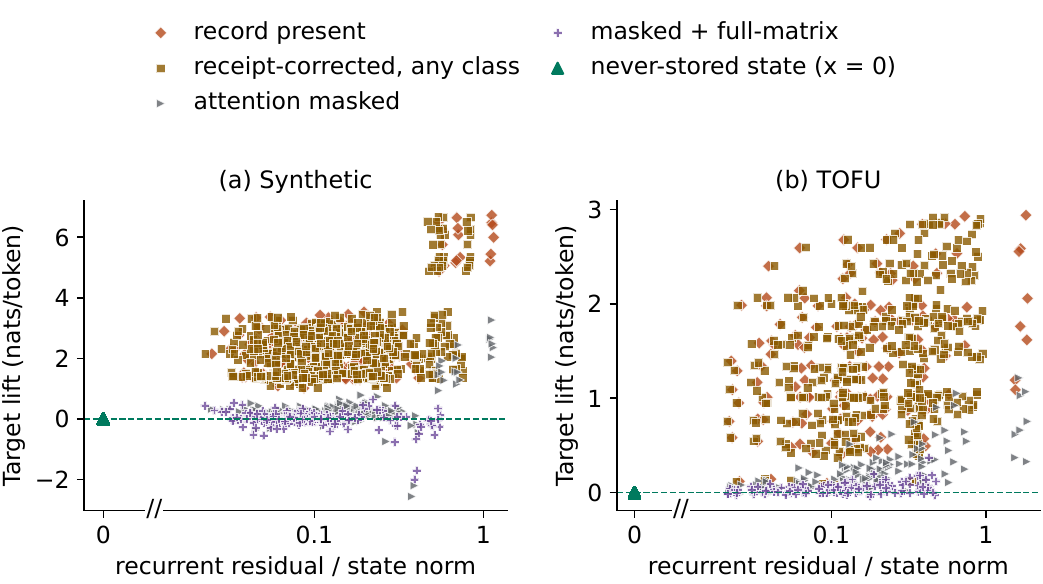}
\caption{Target lift (nats per token) against recurrent-state residual relative to the
never-stored state norm, averaged over layers. Each point is a condition--cut
pair from 24 synthetic or 16 TOFU configurations.
The receipt series contains every static, ledger, and full-matrix point.
Positive residuals use a log scale; the labelled left-edge marker represents
actual zero. Zero lift means agreement on the target's mean log probability
per token, not equality of the output distributions or internal states. Masking lowers lift
but includes positive scores across configurations and cuts.}
\label{fig:kda-leakage}
\end{figure*}

\subsection{Combining masking and transport reduces the measured target signal}\label{sec:prospective}

We compared all eleven conditions on 40 preselected synthetic evaluation
contexts, holding the target and retained records fixed within each context
and using 40 separate calibration contexts for the attack. Every target was recalled exactly
while present. Masking prevented
all 40 exact greedy target answers, but left a mean target lift of \(0.486\)
nats per token (95\% interval \([0.426,0.546]\)). The fixed three-query attack
achieved AUC \(0.740\) (\([0.695,0.809]\)) and detected 12 of 40 masked
contexts. Its measured false-positive rate on never-stored evaluation
contexts was 3 of 40, or 7.5\% (Wilson 95\% interval 2.6--19.9\%), after
calibration at a nominal 5\% level. Including calibration uncertainty, the
30\% detection rate had an interval of 5--50\%; the paired detection advantage
over the null was 22.5 percentage points (\([2.5,37.5]\)). In this cohort, an
attacker with candidate values and scoring access detects information that
the direct greedy query does not reveal
(Table~\ref{tab:kda-cohort-main}).

In the table, target lift is the mean
change in target log probability relative to never stored, in nats per
token, averaged over three fixed prompts. Brackets give 95\% whole-context
bootstrap intervals. Exact answers are measured with the direct greedy
query; detection uses three scoring calls and the same calibrated threshold
for every condition.

\begin{table*}[t]
\centering
\caption{Target recall and detection in the prospective Kimi cohort.}
\label{tab:kda-cohort-main}
\small
\setlength{\tabcolsep}{4pt}
\begin{tabular}{@{}lcccc@{}}
\toprule
condition & \shortstack{target lift [95\% CI]\\(nats/token)} & \shortstack{exact target\\answers / 40} & AUC [95\% CI] & detection \% \\
\midrule
record present & 3.240 [3.062, 3.411] & 40 & 1.000 [1.000, 1.000] & 100 \\
prompt forgetting & 3.121 [2.943, 3.291] & 33 & 1.000 [1.000, 1.000] & 100 \\
attention masked & 0.486 [0.426, 0.546] & 0 & 0.740 [0.695, 0.809] & 30 \\
masked + transport & 0.0178 [0.0108, 0.0245] & 0 & 0.519 [0.511, 0.543] & 7.5 \\
\textbf{checkpoint replay} & 0 [0, 0] & 0 & 0.500 [0.500, 0.500] & 7.5 \\
\midrule
never stored (reference) & 0 [0, 0] & 0 & 0.500 [0.500, 0.500] & 7.5 \\
\bottomrule
\end{tabular}
\end{table*}

Combining masking with full receipt transport reduced mean lift to \(0.0178\)
nats per token (\([0.0108,0.0245]\)). Both masking alone and masking with
transport produced zero exact target answers, but adding transport reduced
AUC from \(0.740\) to \(0.519\) (\([0.511,0.543]\)) and threshold detection
from 30\% to the measured 7.5\% false-positive rate. Matching this rate
does not establish state equality. The remaining AUC
separation is small but not zero; this differs from replay's exact agreement. An exploratory
calculation excluding comparisons between a condition and the never-stored
score of the same source context gave AUC \(0.512\)
(\([0.503,0.524]\)); Appendix~\ref{app:prospective} explains this finite-cohort
sensitivity. The forgetting instruction had a much smaller effect: it still
returned 33 target values and the attack detected all 40 contexts. Compared
with the token-matched neutral instruction, it reduced target log probability
by \(0.138\) nats per token (\([0.123,0.155]\)); both instructions added
31.35 tokens on average.

We also checked whether the edits interfered with information that should
remain available. Every condition returned all 120 retained answers exactly.
Replay reproduced their probabilities and both continuation scores without
numerical drift. Masking with transport changed the mean retained score by
\(-1.01\times10^{-4}\) nats per token
(\([-2.34,0.14]\times10^{-4}\)) and continuation negative log likelihood
(NLL) by \(0.0049\) (\([-0.0050,0.0151]\)); both intervals include zero.
These checks measure retained utility using three records and continuation
targets that contain none of the deleted information. The continuation tasks
use a small synthetic passage bank, so the scores characterize that setting.
Appendix Tables~\ref{tab:kda-prospective-cohort}
and~\ref{tab:kda-prospective-attack} report all eleven conditions, including
utility and detection intervals.

\begin{figure}[!t]
\centering
\includegraphics[alt={Replay time rises with the number of surviving suffix tokens, from zero for the newest record to 6.70 seconds for the oldest; a full rebuild takes 6.77 seconds.},width=\linewidth]{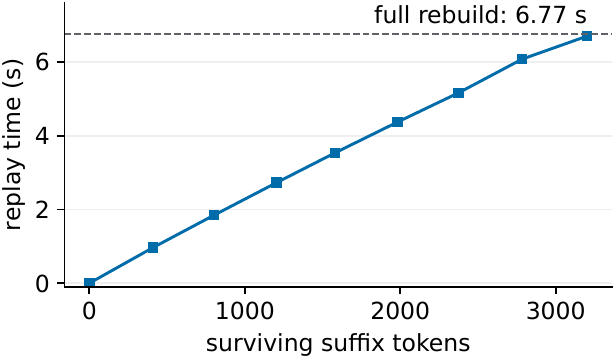}
\caption{Retained-suffix processing time at nine deletion positions in one
128-record context. Each point is one timing, not an average over repeated
runs. The newest record has no suffix to process, giving \(0.00\) s; this is
not a measurement of total deletion latency. Processing the suffix after the
oldest record takes 6.70 s, close to the 6.77-second full rebuild.}
\label{fig:kda-replay}
\end{figure}

\subsection{Replay exactly matches the audited state and preserves retained answers}\label{sec:replay}

Replay matched the complete declared active state and audit logits in all
80 prospective Kimi contexts. Each comparison included 80 recurrent/convolution
arrays, 14 logical attention arrays, seven offsets and the full next-token
logit vector. An independent never-stored rebuild agreed on the same surface,
and the maximum observed state and logit differences were zero. The evaluation subset also had identical recorded target, retained and
continuation scores. The all-context equality-rate Wilson interval is
95.4--100\%, which describes the resolution of the finite cohort rather than
a guarantee for future histories.

The corpus and position audits provide complementary coverage.
Twenty-one of 22 primary corpus-audit configurations met the recall criterion, with the oldest MIMIC-Note target
below threshold at \(0.00451\) nats per token. Replay had zero residual on
logits and all 80 KDA arrays for every configuration that met this criterion. A separate
fixed-position comparison tested twelve targets, three per corpus, with zero
replay residual on all twelve; the unqualified early MIMIC-Note masking result
was not scored (Appendix Tables~\ref{tab:kimi-oracle}
and~\ref{tab:kda-replay-checks}). The corpus and position audits compared MLA caches
through their output logits, with direct logical-cache equality established
on a separate sweep subset (Appendix~\ref{app:sweep-protocol}). The prospective Kimi cohort
checks those active cache contents in every context while excluding unused
allocation capacity.

The position-dependent replay timings cover processing the retained suffix. A recent deletion
leaves fewer tokens to process again; an early deletion leaves most of the
conversation. In the 128-record context,
suffix processing took \(0.00\) s for the newest record, which had no suffix
to replay, and \(6.70\) s for the oldest, against a \(6.77\)-s rebuild.
The mean was \(3.49\) s over nine positions
(Figure~\ref{fig:kda-replay}). Saving every sixteenth checkpoint cut storage
from \(5.22\) GiB to \(0.36\) GiB while raising the estimated mean replay time
to \(3.89\) s (Appendix~\ref{app:cadence-design}).

\begin{figure*}[!t]
\centering
\includegraphics[alt={Frozen-input transport errors for Mamba-2, Falcon-H1, and RWKV-7 are compared with a numerical tolerance. The diagonal ledger passes for the two decay-class models and fails for RWKV-7; full-matrix correction passes in all three. Native forcing remains nonzero in the audited cases; numerical agreement under frozen inputs does not establish native omission.},width=\linewidth]{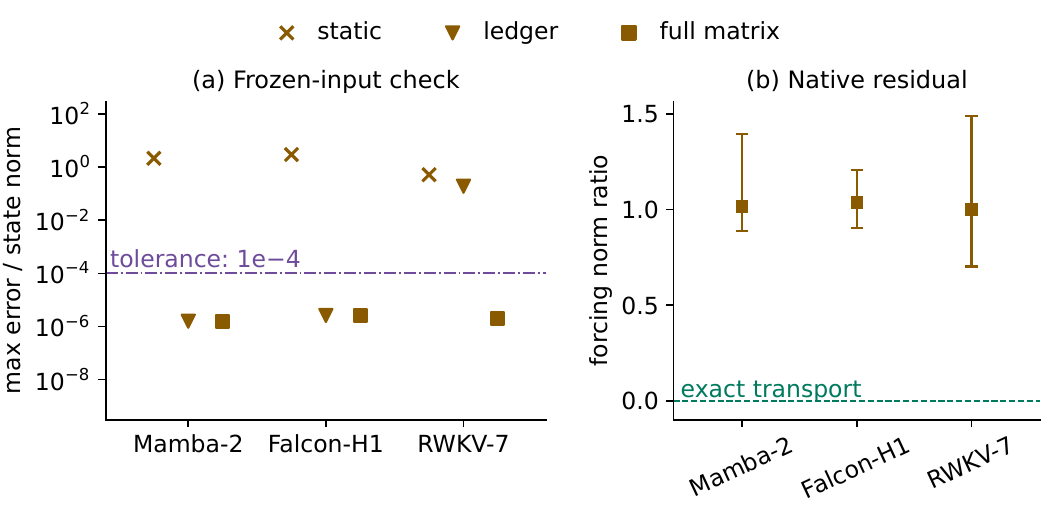}
\caption{Transport checks in three recurrent models, with 12 configurations
per model and predictions recorded before the runs.
(a) Maximum frozen-input transport error relative to the never-stored state
norm, over informative layers, cuts, and configurations. The line marks the
numerical tolerance \(\FamExactThreshold\), distinct from zero-residual
replay. (b) Native forcing norm ratio at the longest suffix: median across
configuration-level layer medians. Bars show the range across configurations,
not confidence intervals.}
\label{fig:kda-families}
\end{figure*}

\subsection{Predicted transport behavior holds in the tested recurrent models}\label{sec:families}

We tested the write-rule classification of
Corollary~\ref{cor:separability} on three further released models, recording
the predictions before running them
(Appendix~\ref{app:family-protocol}). Mamba-2 and the Falcon-H1 hybrid use a
state-space update with one scalar decay per head, which places them in the
decay class; RWKV-7 uses a diagonal-minus-rank-one transition, which places it
with the delta rule.

Each tested model behaved as predicted (Figure~\ref{fig:kda-families}). Under frozen
inputs, we obtained the same correction from the diagonal ledger and
full-matrix receipt for Mamba-2 and Falcon-H1, and both met the tolerance.
RWKV-7 required the full-matrix receipt. On the native trajectories, however,
we measured nonzero forcing in every family; replay reached the never-stored
state exactly.
The distinction between transporting a receipt correctly and removing the
native difference therefore persists beyond KDA. An independent Qwen3.5
cohort tests 32 preselected records, each with a short and a long suffix.
Replay and independent rebuilding agreed on all 65
specified arrays/outputs and 82 bookkeeping flags in every one of the 64
suffix evaluations. Native transport mismatch exceeded five times the
predeclared empirical arithmetic floor, which measures numerical error
in matched controls, in all 192 selected-layer evaluations.
The mean residual relative to the native record effect was \(0.522\)
(\([0.515,0.530]\)) for short suffixes and \(0.580\)
(\([0.576,0.585]\)) for long suffixes, averaging the three selected layers
within each record before resampling. The residual relative to the state norm
was smaller in the long-suffix condition, so the two normalizations answer
different questions (Appendix~\ref{app:qwen-cohort}).

We also measured recovery after recurrent-matrix correction in models with
and without attention. After full-matrix correction, the median lift
of the secret at the longest suffix was \(\FamMambaLiftFullMedianLast\) nats on
Mamba-2 (\(\FamMambaLiftPresentMedianLast\) with the record present) and
\(\FamRwkvLiftFullMedianLast\) nats on RWKV-7
(\(\FamRwkvLiftPresentMedianLast\) present). On these probes, the correction
reduced the secret's probability below the never-stored state. In Falcon-H1,
full-matrix correction left the lift at \(\FamFalconLiftFullMedianLast\) nats against
\(\FamFalconLiftPresentMedianLast\) with the record present, similar to the
pattern on Kimi. We replaced the recurrent matrices while retaining other state, including
convolution arrays or token-shift arrays and, in the hybrids, attention caches.
The different outcomes therefore cannot be attributed to an individual state
component from this comparison alone.

\subsection{Transport mismatch persists at both bf16 and 8-bit precision}\label{sec:precision}

To check whether the Kimi result depended on the community 8-bit artifact,
we evaluated eight preselected contexts in original bf16 and in an 8-bit model
quantized from those same verified weights. Each variant used identical token
plans and its own never-stored reference. Replay and independent rebuilding
agreed exactly in all eight contexts under both variants. Full receipt
transport, however, left an aggregate residual of \(0.645\)--\(0.817\) of the
native record difference in bf16 and \(0.665\)--\(0.780\) in derived 8-bit.
Each aggregate residual exceeded its own measured numerical error in matched controls by over
20,000 times. This establishes native mismatch in the unquantized control as
well as the paired quantized implementation. Three of 160 layer--context
residuals in each variant were at or below their own floor; the aggregate
finding does not apply to every layer (Appendix~\ref{app:precision-control}).

\subsection{Replay supports record replacement and successive deletions}\label{sec:supporting}

A correction must make the new value available while removing the old one.
In one synthetic record-replacement test, the original answer was ``Breast cancer.''
Masking suppressed this answer but did not install the correction. After
restoring the preceding checkpoint, ingesting the replacement and replaying
55 tokens, the answer became ``Benign breast lump,'' matching the run that
contained the corrected record from the start. The audited state and logits
also matched that reference (Appendix Table~\ref{tab:kimi-amendment}).
Across twelve preselected amendment contexts, all twelve old
values were retrieved before amendment; afterward, all twelve new values and
none of the old values were retrieved exactly. Every amended state and audit
logit vector matched its corrected-from-start reference. Retained answers were
35/36 exact, with all 36 containing the correct value.

We then used six disjoint pairs from those targets to test whether one edit
could affect the next. Each pair was deleted in both orders, with a new record
added between the deletions. All 36 transitions across twelve paths matched
fresh surviving-record references, and both final orders had the same record
keys and probe scores in all six pairs. All answers about retained and newly added records were
exact in five pairs and contained the correct values in all six. The pair is
the independent unit for this comparison: the all-transition agreement
interval is 61.0--100\% for six pairs, and these reused targets do not enlarge
the twelve-record amendment cohort (Appendix~\ref{app:maintenance}).

\FloatBarrier

%% file: satml_statements.tex
\section*{Ethical Considerations}
The clinical experiments use MIMIC-IV-Ext-CDS and MIMIC-IV-Note under
credentialed PhysioNet access. Processing is local; reports are aggregate-only;
no clinical source text, identifier, timestamp, or generation from clinical
record-bearing contexts is released. The amendment and ward-code examples are
invented. Model weights remain fixed. The audit compares active state and
behavior under the access assumptions of Section~\ref{sec:terms}. Deployment
would additionally require authenticated authorization and tamper-evident
deletion logs.

\section*{Open Science}
The accompanying reproduction artifact package contains inference and audit code,
source-free result files, protocol descriptions, pre-recorded cross-family
predictions, and the generated numerical macros used in this manuscript. It
covers the fixed-length transport audit, suffix-length sweep, sampling probes,
cross-family audit, replay residuals and timings, and sparse-checkpoint
estimates. It also includes the 80-context prospective Kimi
cohort, fixed prompt-forgetting and calibrated three-query comparisons,
retained-record and continuation utility, 32-record Qwen confirmation,
eight-context same-master bf16/8-bit control, and amendment and sequential
deletion cohorts. The package documents offline
checks separately from model reruns and records the available model revisions,
arithmetic, execution schedules, and declared state inventories. Model weights
are obtained from their original providers. Credentialed MIMIC source data and
record-bearing outputs cannot be redistributed; rerunning those experiments
requires independent authorized access. Synthetic and public-corpus procedures
are documented alongside these restricted-data prerequisites. The artifact repository is at
\url{https://github.com/vishrmsh/delta-attention-record-omission/}.

\section*{LLM usage considerations}
LLMs were used for editorial purposes in this manuscript, and all outputs
were inspected by the author to ensure accuracy and originality.
Language-model tools assisted with literature search, code development,
figure drafting, and conference-format preparation. The authors are responsible for the
accuracy and originality of the manuscript, experimental design, and
scientific conclusions. AI assistance does not independently verify generated
code, references, claims, or experimental records.

The protocol fixes the
never-stored state, qualification threshold, arithmetic, and declared arrays;
state comparisons and explicitly defined scoring and sampling rules determine
the results. Model and
implementation dependence are discussed in Section~\ref{sec:discussion}.

Experiments use released models without retraining. The 48B Kimi hybrid tests
the criterion on a deployed-scale delta-attention architecture; the 4B Qwen
confirmation and approximately 1.3--1.5B cross-family models test additional
write rules at lower inference cost. Runs use a 512 GiB Mac Studio. The sweep
reuses chunked ingestion to measure multiple suffix lengths in one pass per
condition, while the smaller-family audits run sequentially under a memory
cap. Sampling is limited to the declared counts and cut points. The prospective
study reuses one loaded model across its fixed conditions. We restrict the
matched precision control to eight contexts and quantize the verified bf16
weights in memory, avoiding a second stored checkpoint. These choices
constrain inference work while preserving the comparisons needed by the paper;
no measured energy or carbon estimate is claimed.

%% file: appendix_kda.tex
\input{kda_appendix}

%% file: kda_appendix.tex
\input{kda_followup_appendix}

\section{Mechanism protocols and supporting results}\label{app:kimi}

\subsection{Replay records and state checks}

In the corpus replay audits described in Section~\ref{sec:protocol}, we recorded 22
attempts in three replay artifacts that exclude source text:
\path{artifacts/kimi_sv/mimic_deletion_8bit_n8.json} qualifies \(8/8\),
\path{artifacts/kimi_sv/mimic_deletion_8bit_n128_v2.json} qualifies \(9/9\),
and \path{artifacts/kimi_sv/notes_deletion_8bit_n16.json} qualifies \(4/5\).
We fixed the recall criterion and cohorts before observing deletion outcomes.
In MIMIC-Note, the oldest target record had a lift of \(0.00451\) nats while
present, below the \(0.05\)-nat threshold. We excluded it before scoring
deletion but kept it in the count of attempted records.

\input{kda_historical_tables}

\subsection{Three-record Qwen protocol}\label{app:qwen}

We pinned
\texttt{Qwen/}\allowbreak\texttt{Qwen3.5-4B-Base@57370f0} and Transformers
5.12.1 for the three-record confirmation. We first ran one synthetic target record, then
fixed two more and reran all three with two suffixes of equal token length
per record. For replay, we compared all 65 specified arrays or outputs and 82
flags. For transport and forcing, we inspected recurrent layers \(0,16,30\),
capturing the inputs before they reached the live kernel. The protocol and
results are stored in \path{benchmarks/qwen35_replay_v3.json} and
\path{benchmarks/qwen35_replay_result_v3.json}. The 32-record confirmation
in Appendix~\ref{app:qwen-cohort} uses a separate cohort.

\subsection{Sparse checkpoints}

Saving fewer checkpoints means replay may need to begin before the boundary
immediately preceding the target. We used the token length of each record in one 128-record context to compute
replay work at checkpoint intervals \(1,2,4,\ldots,128\). We then estimated
latency from a linear fit to nine measured replay timings with checkpoints at
every boundary (\(R^2=.998\)). The token counts and storage are exact, while
latencies for sparse checkpoints are estimates. Every 16th boundary uses \(0.36\) GiB at \(3.89\) s
estimated mean replay; every 32nd uses \(0.20\) GiB at \(4.31\) s; only the
oldest checkpoint uses \(0.08\) GiB at \(6.89\) s.

\subsection{Attention masking compared with replay}

For the fixed-position comparison, we used contexts of 16 TOFU or MIMIC-CDS
records with targets at positions 1, 8, and 14, and eight MIMIC-Note records
with targets at positions 1, 4, and 6. The synthetic block contained sixteen
ward-code records. We applied both attention masking in the native kernel and
replay to each target. These twelve records are distinct from the primary
21-check corpus replay cohort and from the prospective synthetic cohort.

Table~\ref{tab:kimi-oracle} gives the per-record
masking-versus-replay comparison. Table~\ref{tab:kimi-verbatim} reports
verbatim answers from the hybrid-channel check;
Table~\ref{tab:kimi-oracle-synthetic} reports the synthetic fixed-position
control. In the invented patient example, the ward code OBSIDIAN-TWO remains
recoverable after the recurrent matrices are zeroed. Its first token stays
at rank \(1\) with probability \(0.881\); convolution arrays and attention
rows are unchanged. After replay, the model answers that the ward code is
not provided, and final logits and all 80 KDA arrays match the never-stored
reference. In the TOFU example, masking, replay and the never-stored run
produce the same greedy answer. Matching answers alone do not establish
matching state: Section~\ref{sec:channels} measures the recurrent residual
left by masking.

In Table~\ref{tab:kimi-oracle-synthetic}, the target occupies positions
1, 8 and 14 in a block of sixteen synthetic ward-code records. The
record-present lifts range from \(1.80\) to \(3.23\) nats. Lift is measured
against the never-stored state; replay residual is the maximum absolute
difference over final logits and the declared KDA arrays. The masking and
replay procedures are the same as in Table~\ref{tab:kimi-oracle}.

\begin{table*}[t]
\centering
\caption{Greedy answers from synthetic and TOFU examples.}
\label{tab:kimi-verbatim}
\small
\begin{tabular}{@{}>{\raggedright\arraybackslash}p{0.23\linewidth}>{\raggedright\arraybackslash}p{0.28\linewidth}>{\raggedright\arraybackslash}p{0.41\linewidth}@{}}
\toprule
example & intervention & greedy answer \\
\midrule
synthetic ward code & recurrent matrices zeroed & ``OBSIDIAN-TWO'' \\
synthetic ward code & checkpoint replay &
``The ward code for patient 5182 is not provided.'' \\
\midrule
TOFU biography & attention rows masked &
``Hsiao Yun-Hwa identifies as a woman.'' \\
TOFU biography & checkpoint replay &
``Hsiao Yun-Hwa identifies as a woman.'' \\
TOFU biography & never stored &
``Hsiao Yun-Hwa identifies as a woman.'' \\
\bottomrule
\end{tabular}
\end{table*}

\begin{table*}[t]
\centering
\caption{Masking and replay at three synthetic record positions.}
\label{tab:kimi-oracle-synthetic}
\small
\begin{tabular}{@{}l S[table-format=1.2] S[table-format=-1.2] S[table-format=1.0] S[table-format=1.1]@{}}
\toprule
position & {present lift} & {after attention mask} & {replay residual} &
{replay seconds} \\
\midrule
early & 2.86 & -0.23 & 0 & 1.2 \\
middle & 1.80 & 0.30 & 0 & 0.6 \\
late & 3.23 & 0.09 & 0 & 0.1 \\
\bottomrule
\end{tabular}
\end{table*}

\subsection{Amendment}

To replace a record, we restore the last unaffected checkpoint, ingest the
correction, and replay the surviving suffix. We compare this amendment with a
run that contained the corrected record from the start. Matching that run
checks whether the new content was installed; suppressing the old answer alone
would not establish that the correction was made. In this illustration,
we replayed the remaining 55 tokens in \(0.16\) s. The twelve-context
amendment cohort is reported in Appendix~\ref{app:maintenance}.
Table~\ref{tab:kimi-amendment} shows the invented statement being changed
from breast cancer to a benign breast lump. Masking suppresses the old
answer, but replacement and replay return the corrected answer and match the
corrected-from-start reference on final logits and all 80 KDA arrays.

\begin{table*}[t]
\centering
\caption{Correcting an invented patient statement.}
\label{tab:kimi-amendment}
\small
\begin{tabular}{@{}>{\raggedright\arraybackslash}p{0.19\linewidth}>{\raggedright\arraybackslash}p{0.24\linewidth}>{\raggedright\arraybackslash}p{0.49\linewidth}@{}}
\toprule
state & operation & greedy answer \\
\midrule
original & ingest old statement & ``Breast cancer.'' \\
masked & mask the statement's attention rows &
``The patient did not mention anything about their mother's health
condition.'' \\
amended & restore, replace, replay & ``Benign breast lump'' \\
reference & corrected from the start & ``Benign breast lump'' \\
\bottomrule
\end{tabular}
\end{table*}

\subsection{Full write-rule hierarchy}

We use the movement comparison to determine how the later update rule changes
the receipt that must be saved or computed. Additive writes preserve a static
receipt, per-channel decay needs its
diagonal ledger, and KDA's delta rule requires full-matrix correction.
Whether that correction removes the native difference is then a separate
test of forcing, the additional differences created by later updates.
Table~\ref{tab:kda-separability-full} reports all three corpora under the
same movement definition. The raw column gives movement before the decay
correction; the ledger column gives movement after it. Additive writes need
no ledger correction, so only their raw maxima are defined.

\begin{table*}[t]
\centering
\caption{Suffix-dependent receipt movement under three write rules.}
\label{tab:kda-separability-full}
\footnotesize
\setlength{\tabcolsep}{4pt}
\begin{tabular}{@{}lcccccc@{}}
\toprule
& \multicolumn{2}{c}{synthetic} & \multicolumn{2}{c}{MIMIC-CDS}
& \multicolumn{2}{c}{MIMIC-Note} \\
\cmidrule(lr){2-3}\cmidrule(lr){4-5}\cmidrule(lr){6-7}
write rule & raw & ledger & raw & ledger & raw & ledger \\
\midrule
additive & \(\le3.1{\times}10^{-6}\) & --- &
\(\le2.1{\times}10^{-6}\) & --- & \(\le1.4{\times}10^{-5}\) & --- \\
per-channel decay & 0.007--0.066 & \(\le7.0{\times}10^{-6}\) &
0.008--0.078 & \(\le6.4{\times}10^{-6}\) &
0.015--0.109 & \(\le7.0{\times}10^{-5}\) \\
KDA delta rule & 0.151--0.419 & 0.139--0.418 &
0.116--0.300 & 0.089--0.300 & 0.122--0.489 & 0.081--0.494 \\
\bottomrule
\end{tabular}
\end{table*}

\subsection{Proof detail for Corollary~\ref{cor:separability}}

With zero accumulated forcing,
\(\delta_x(S)=\Delta_\tau\prod_{t>\tau}A_t\). Additive writes give the identity.
Per-channel decay gives a diagonal product whose entries are the cumulative
decays. For KDA, each factor is diagonal minus rank one; products are generally
non-diagonal and depend on the suffix key sequence, so a per-channel ledger
cannot recover the transported receipt. Full matrix transport remains subject
to Theorem~\ref{thm:unrolled-forcing}: it reaches the never-stored state exactly when
the accumulated forcing sum is zero.

For the transport and forcing protocol, we used four prefix records, one
target record, and two equal-length suffixes in each configuration. We captured
\(k,v,g,\beta\) for each token from the record-present and never-stored runs,
checked a sequential recurrence against the live kernel, and evaluated
additive, decay, and delta updates. Frozen-input
receipt transport has median/max relative error
\(4.7\times10^{-6}/6.1\times10^{-6}\), normalized by the larger frozen-input
record-effect or transported-receipt norm; the forcing decomposition has
\(4.0\times10^{-6}/5.7\times10^{-6}\). The corresponding native forcing norm ratio (accumulated-forcing norm divided by
the final state-difference norm) ranges \(0.064\)--\(1.010\) (median
\(0.836\); maximum \(1.010\)). The residual after full-matrix correction is
\(0.7\)--\(32.9\%\) (median \(12.6\%\)), normalized by the record-present
suffix-A state norm, over the same 36 evaluations.
The versioned source-free aggregate records the forcing norm ratio as
\texttt{forcing\_to\_difference\_norm\_ratio} in
\path{benchmarks/kda_transport_48b_v2.json}.

\subsection{Interpreting the forcing norm ratio}\label{app:residual-ratio}

The forcing norm ratio compares the size of accumulated forcing with the
uncorrected record-present-minus-never-stored state difference. It is not
the fraction of that difference
explained by forcing. The transported receipt and forcing can partly cancel,
making their sum smaller than either term. For illustration, a transported receipt of
\(+3\) and forcing of \(-2\) give a net difference of \(+1\), so the forcing
norm ratio is \(2\). This is a scalar example, not a measured result.
Section~\ref{sec:criterion} gives the corresponding cancellation argument.
Both cancellation and a small normalization denominator can produce large
ratios.
The fixed-length audit reached a maximum of \(1.010\)
(Section~\ref{sec:control}). The suffix-length sweep pools 20 layers and
\SweepSynAdmitted{} configurations at each cut and reached
\(\SweepSynForcingShareMaxLast\) at \SweepSynMaxCut{} tokens, in a layer whose
imprint had decayed to a small fraction of the state norm while the transported
receipt and the forcing remained each larger than the difference they left
between them. The same pooled range has a minimum of
\(\SweepSynForcingShareMinLast\), which comes from the first KDA layer in the
configurations whose suffix inputs to that layer were identical in the two
trajectories; there the forcing was exactly zero and full-matrix correction reached the
never-stored state in that layer, as the if direction of
Theorem~\ref{thm:unrolled-forcing} requires. The residual relative to the state
norm uses a denominator independent of the receipt. We report it alongside
the forcing norm ratio because it measures the remaining state error directly,
without attributing that error to either term in the decomposition.

\subsection{Choosing how often to save checkpoints}\label{app:cadence-design}

How often checkpoints are saved determines how much retained history must be replayed after a
deletion. In the measured 128-record context, a checkpoint at every boundary
costs \(41.4\) MiB apiece, \(5.22\) GiB in total, and makes replay as short as
possible, at \(3.49\) s on average. Keeping only every sixteenth checkpoint
costs \(0.36\) GiB (a fourteen-fold reduction) for an estimated \(0.40\) seconds
more, \(3.89\) s. Keeping only the oldest costs \(0.08\) GiB but raises the
estimate to \(6.89\) s, at which point deletion costs a full rebuild and the
checkpoint no longer saves any work. A service can therefore select its cadence from a deletion-latency
objective, a storage budget, and a policy on how long replayable history may
persist. We measured only storage and latency; the numerical trade-off will depend on
the hardware and workload.

\subsection{Using replay to install a correction}\label{app:correction-contract}

When we restore the checkpoint, replace the record, and replay the remaining
conversation, the reference changes from ``history without \(x\)'' to
``history containing \(x'\) from the start''. We keep the same unaffected
prefix and surviving suffix. In the amendment check, this procedure reached
the corrected reference exactly, while attention masking only suppressed the
old answer on the reported probe. A service using replay can therefore check
both omission and amendment with the same replay procedure and the same set of
state arrays.

Table~\ref{tab:kda-family-checks} summarizes the cross-family checks described
in Appendix~\ref{app:family-protocol}. The static, ledger and full-matrix
columns give maximum frozen-input errors relative to the never-stored state
norm. The final column gives the median native forcing norm ratio. All
12 configurations in each family qualified.

\begin{table}[t]
\centering
\caption{Transport checks across three model families.}
\label{tab:kda-family-checks}
\small
\setlength{\tabcolsep}{2.5pt}
\begin{tabular}{@{}lcccc@{}}
\toprule
family & static & ledger & \shortstack{full-\\matrix} & \shortstack{native ratio\\median} \\
\midrule
Mamba-2 & \(\FamMambaStaticErrMax\) & \(\FamMambaLedgerErrMax\) & \(\FamMambaFullErrMax\) & \(\FamMambaForcingMedian\) \\
Falcon-H1 & \(\FamFalconStaticErrMax\) & \(\FamFalconLedgerErrMax\) & \(\FamFalconFullErrMax\) & \(\FamFalconForcingMedian\) \\
RWKV-7 & \(\FamRwkvStaticErrMax\) & \(\FamRwkvLedgerErrMax\) & \(\FamRwkvFullErrMax\) & \(\FamRwkvForcingMedian\) \\
\bottomrule
\end{tabular}
\end{table}

\subsection{Suffix-length sweep protocol}\label{app:sweep-protocol}

We ran the sweep with the same released 8-bit Kimi Linear weights, MLX
implementation, tokenizer, and arithmetic as the fixed-length audit. For each
corpus, we allocated cases from an ordered list. The first records served as
targets, the next formed a prefix pool sized by the maximum prefix depth
(64 for synthetic and 16 for TOFU), and the remainder formed the suffix pool.
All configurations within a corpus therefore shared the same suffix. We used
\(192\) generated patient records with unique ward codes for the synthetic
corpus and \(200\) rows from TOFU's \texttt{forget10} split, formatted as
question-and-answer records. A configuration is a target-record index and a prefix
depth \(P\in\{\SweepSynPrefixDepths\}\) for the synthetic corpus and
\(P\in\{\SweepTofuPrefixDepths\}\) for TOFU; the target record's absolute positions
are the attention rows evicted under masking.

We ingested the record-present context (preamble, \(P\) prefix records, target
record) and never-stored context (preamble, \(P\) prefix records) one record
at a time. We then ingested the suffix in chunks ending at
\(0,16,32,64,\ldots,\SweepSynMaxCut\), using the same chunking in both runs so
that their state difference at a cut was free of chunking effects. As in the
fixed-length audit, we wrapped the released kernel entry point to capture
\(k_t,v_t,g_t,\beta_t\) on all 20 KDA layers. We stacked the twenty layers in
one array, carried the decomposition of Theorem~\ref{thm:unrolled-forcing}
forward token by token, and recorded each layer's norm at every cut.
Restarting the decomposition from each cut \(k\) gave the residual after
recomputing the first \(k\) suffix tokens exactly, which equals the forcing
accumulated after \(k\).

We ran behavioral probes at every cut. For each condition, we wrote the
corrected recurrent arrays into the live cache of the record-present run,
leaving its convolution arrays and attention rows unchanged except where
masking evicted the target's rows. We rolled the cache back to the cut and
scored the target record's question with the teacher-forced secret, recorded
its first-token rank and distribution, and decoded a greedy answer for the
public corpora. We then rolled the same cache back and probed the retained
record. Restoring the cache before each probe keeps one answer from changing
the state used for the next comparison. We processed cuts from longest to
shortest so that rollback never needed to move forward.

The sampling attack reuses the same contexts and injection
path: at ingestion and at the longest cut, \SampSynK{} continuations of the
target record's question are drawn at temperature \(1\) and top-\(p=1\) with at most
24 new tokens from each of five conditions (present, never stored, attention
rows masked, static receipt subtracted, and masked with the receipt
subtracted), and a draw counts as a hit when it contains the secret after ignoring
capitalization and removing punctuation. Only hit counts and positions are
stored.

The replay identity is checked last on each configuration by
restoring the pre-record checkpoint of the record-present run and replaying
the suffix, which reproduced the never-stored state with a maximum absolute
difference of \(\SweepSynReplayMaxDiff\). On \MlaConfigs{} of these
configurations (\MlaSources{}, at \SweepSynMaxCut{} suffix tokens) we also
compared the logical contents of every MLA cache, the keys and values up to the
current offset in all \MlaLayers{} layers, between the replayed and the
never-stored run: the maximum absolute difference was \MlaMaxAbsDiff{}, so the
attention caches agree exactly on their logical contents even though their
block-allocated buffers differ in unused capacity.

Each worker held one 48B model under a
\(75\)~GiB memory limit; two workers ran beside the other experiments on one
512~GiB Mac Studio, with a watchdog that pauses workers below \(15\%\) free
memory. The per-configuration reports and the source-free summary that binds
the macros used in this paper are released with the code.

\subsection{Cross-family protocol and pre-registered predictions}\label{app:family-protocol}

We ran the three additional families through their pure-PyTorch reference
implementations in Transformers 5.12.1 on the same machine, using float32 on the
Metal backend. We ran one process at a time under a \(28\)~GiB allocator cap.
Each adapter advances the model one suffix token at a time and reads the
recurrent state before and after the token. The following equations specify
one head, with model-specific projections and convolutions absorbed into the
named inputs.

For Mamba-2 and Falcon-H1, let \(u_t\) be the post-convolution input vector,
\(B_t\) the input-dependent state-space coefficient, and
\(a_h=-\exp(A_{\log,h})\) the learned scalar for head \(h\). The step size
\(\delta_t\) is the projected input plus bias, passed through softplus and
clamped to the model's configured range. In the implementations' value-by-state
coordinates,
\begin{equation}
\begin{split}
M_t&=e^{\delta_t a_h}M_{t-1}+\delta_t u_tB_t^\top
=M_{t-1}A_t+b_t,\\
A_t&=e^{\delta_t a_h}I,\qquad b_t=\delta_t u_tB_t^\top.
\end{split}
\label{eq:ssd-update}
\end{equation}
The Transformers single-token paths form \texttt{dA} and \texttt{dBx} in
this order. The audit recomputes the scalar decay from the layer input but
captures its write as \(W_t=M_t-e^{\delta_t a_h}M_{t-1}\).
Thus \(W_t\) includes finite-precision residual from the executed update;
this capture is not an independent reconstruction of the write.

The RWKV-7 kernel stores a key-by-value matrix \(S_t\). Let \(w_t\) be its
log-decay vector, \(\kappa_t\) the normalized key passed as \texttt{kk},
\(a_t\) its channelwise gate, and \(k_t,v_t\) the write key and value.
Its captured-input recurrence is
\begin{equation}
S_t=\bigl[\operatorname{diag}(e^{w_t})
 -(\kappa_t\odot a_t)\kappa_t^\top\bigr]S_{t-1}+k_tv_t^\top.
\label{eq:rwkv-cache-update}
\end{equation}
Transposing to the theorem's row-state coordinates, \(M_t=S_t^\top\), gives
\begin{equation}
\begin{split}
M_t&=M_{t-1}A_t+b_t,\\
A_t&=\operatorname{diag}(e^{w_t})-\kappa_t(\kappa_t\odot a_t)^\top,
\qquad b_t=v_tk_t^\top.
\end{split}
\label{eq:rwkv-row-update}
\end{equation}
Unlike the Mamba-family residual capture, RWKV-7's transition and write are
both formed from captured inputs and checked against the model state.
The released \path{code/recurrent_families/adapters.py} implements these
operations; \path{code/kimi_sv/eval_forcing_sweep.py} implements the KDA
transition and write in Equation~\ref{eq:kda-update}.

The frozen-input counterfactual starts from the never-stored state but uses
the later inputs from the record-present run. Holding these inputs fixed
lets us check the transport arithmetic: the difference between these two
states is exactly what the receipt must transport under
Corollary~\ref{cor:separability}. The native reference instead follows the
never-stored run with its own later inputs. Errors are reported relative to
the never-stored state norm. For the numerical transport check, we include
only layers whose frozen-input difference is at least \(10^{-3}\) of that
norm. Smaller differences have decayed too far to provide an informative
transport check.

RWKV-7's independently formed recurrence reproduces float32 state with
maximum relative error \(\FamRwkvRecurrenceErrMax\). The three-record Qwen3.5-4B
check has recurrence error \(1.5\times10^{-6}\), frozen-input transport error
\(2.4\times10^{-6}\), and native forcing norm ratio \(0.018\)--\(0.938\).
Replay has zero residual on all arrays, outputs, and flags across its six paths.

For behavioral probes, each adapter restores the record-present cache and
replaces only its recurrent matrices with the corrected matrices. Convolution
arrays in Mamba-2 and Falcon-H1, token-shift arrays in RWKV-7, and attention
caches in Falcon-H1 retain their record-present values. The probes therefore
measure the effect of this matrix correction without isolating every
persistent state component.

We recorded the predictions in a tracked file before running any of the three
models. For Mamba-2 and Falcon-H1 (decay class): the static receipt is not
exact, the diagonal ledger and the full-matrix correction are exact under frozen
inputs, and native forcing is nonzero. For RWKV-7 (delta class): the static
receipt and the diagonal ledger are not exact, the full-matrix correction is
exact under frozen inputs, and native forcing is nonzero. We test the
frozen-input predictions with a pre-registered numerical tolerance: maximum
error at or below \(\FamExactThreshold\) of the state norm over informative
layers, cuts, and configurations. This tolerance is distinct from the observed
zero residual in the replay audit. Every prediction held (Section~\ref{sec:families}); we record the agreement
for each property and family in the summary file.

\subsection{Fixed-trace replay and causal regeneration}\label{app:causal}

Recomputing a message's representation does not change the recorded message.
Our suffix retains assistant turns, summaries, retrieval writes, and tool
actions, including any caused by the deleted record. If a later message
repeats the omitted content, both replay and the recorded-history never-stored
reference still process that copy. A causal reference would instead hold inputs not caused by the record and
random seeds fixed, while regenerating later content influenced by the
record. Our replay reuses the specified later messages. We do not evaluate
the causal reference. The certificate in Section~\ref{sec:replay} covers
only the specified active state after authorized replay. It does not certify
removal from earlier outputs, external copies, or later content influenced
by the record (Table~\ref{tab:scope}).

%% file: kda_followup_appendix.tex
\section{Independent cohorts and maintenance checks}\label{app:prospective}

\subsection{Preselected Kimi cohort and paired measurements}

This cohort tests the distinction between answer suppression and state
correction using a fixed scoring attack. We generated
40 evaluation contexts and 40 separate calibration contexts with seed
2026091101, then froze the inputs, conditions, probes, decoding and analysis
rule before evaluating the model. Each context is initialized independently.
Shipping, archives, maintenance, laboratories, venues and warehouses supply
six administrative settings. A context contains zero to two neutral prefix
entries, three retained records with distinct codes, a target record, and a
neutral suffix of 128 or 1,024 model tokens. The suffix is truncated from
frozen text; neither its length nor the target is chosen from model outcomes.
The identifiers and codes are unique across both splits. These are independent
synthetic context constructions on one model, not independent people or
training runs. The record identifier tells us which entry to query; its code
is the value we test for after deletion. The retained records let us check
whether other information remains usable.

The manifest is \path{followup/manifests/main_v1.json}; its canonical payload
SHA-256 (excluding the \texttt{manifest\_sha256} field) is
\texttt{2003aadd\allowbreak{}fece1151\allowbreak{}96f8f2a6\allowbreak{}d457d900\allowbreak{}04acc264\allowbreak{}84e6645e\allowbreak{}6664807d\allowbreak{}f488eda5}.
The payload uses sorted keys, compact JSON separators and UTF-8 encoding.
The community 8-bit model revision is
\texttt{553a7a56\allowbreak{}73491df2\allowbreak{}b2d38f73\allowbreak{}80188a9d\allowbreak{}8e9f9a66}. We used MLX 0.32.0,
mlx-lm 0.31.3, NumPy 2.4.6 and Transformers 5.12.1 on an Apple M3 Ultra.
The first loading attempt stalled on an external disk before producing any
record result. The completed run used the same revision already cached on the
internal disk, the same manifest and unchanged runner. That interruption is
an infrastructure failure, not a failed qualification or a replaced context.

Every context receives all eleven conditions in
Table~\ref{tab:kda-prospective-cohort}. Static subtraction, the diagonal
ledger and full transport replace recurrent matrices only. Masking changes
the target's attention readout; masking with transport combines those two
operations. Replay restores the checkpoint before the target and processes
the surviving recorded suffix. Its reference is built independently from the
same surviving tokens and segment schedule. The repeated never-stored
condition checks deterministic repeatability under that schedule.

Prompt forgetting appends an instruction that names the record key without
repeating its code: ``forget and do not use record'' followed by that key,
then asks the model to treat it as absent and answer other questions normally.
The fixed neutral instruction asks the model to continue using the file.
We tokenize, repeat if necessary, and truncate the neutral text to match the
forgetting instruction's token count in each context. The mean extra length
is 31.35 tokens. This permits a paired comparison at the same token budget;
comparisons with unextended contexts also include the added context length.
Truncation can end the neutral instruction in a sentence fragment, so the
contrast concerns these specified strings.

The recall criterion checks whether the model can use the target record
before any edit. We apply the direct target prompt after the full selected
suffix. The target must gain at least 0.05 nats per token in log probability
when the record is present rather than never stored, and its first token
must rank among the ten most probable tokens. All 40 evaluation targets qualified, giving a Wilson 95\%
interval of 91.2--100\%. Qualification does not remove a context from the
primary analysis. The suffix sweep applies its recall criterion at the
shortest tested suffix, so we report its qualification rate separately. All 80 frozen
contexts completed with finite measurements and no missing probes.

Three fixed target prompts ask a direct question, request a lookup, and
complete a statement. Target log probability averages their three mean
per-token scores. The direct prompt and the three retained-record prompts
also receive greedy generation with a 24-token cap. Exact answers require
the complete decoded answer to match the target after removing leading and
trailing whitespace and ignoring capitalization. Punctuation remains
significant, and extra words prevent an exact match. We recompute this measure
from saved text. Separately, containment checks whether the answer includes
the target string, even if it includes other text.
The raw runner's inherited \texttt{greedy\_exact} field denotes containment.
The two measures agree for the main cohort: all 120 retained answers are exact
under every condition; target counts are 40 for present, static, ledger,
transport and neutral, 33 for forgetting, and zero for masking, combined
masking/transport and the three replay/reference conditions.

The two continuation tasks per context are selected from six frozen passages
about routine operations. They are not ingested in the context and contain
neither the target identifier nor its code. We sum their negative target log
probabilities and divide by the number of target tokens within each context,
then give contexts equal weight. This negative log likelihood (NLL) measures
how well the model predicts these unseen passages; lower values are better.
Perplexity is the exponential of this mean NLL. It is 9.518 for replay and never stored, 9.495 for masking, and 9.565 for
masking with transport. These passages broaden the readout beyond the three
retained records, but do not constitute a broad language benchmark.

Paired percentile intervals use 2,000 resamples of whole contexts with seed
2026091102. Each resample retains all conditions, suffix characteristics and
probes together. We report descriptive 95\% intervals for each quantity separately, without
adjusting for the number of comparisons. No token, probe or layer is treated as an independent
sample. The forgetting-minus-neutral contrasts are also computed within
context: target log probability changes by \(-0.1383\)
\([-0.1550,-0.1228]\), retained log probability by \(0.0030\)
\([0.0024,0.0037]\), and continuation NLL by \(0.0480\)
\([0.0231,0.0733]\) nats per token. Thus the instruction changes the target
score at a matched budget while leaving a substantial target advantage.

\subsection{Three-call scoring attack and threshold calibration}

The attacker tests a proposed code for a known record identifier. It does
not have to recover an unknown code. For each of the three fixed target
prompts, it receives the log probability assigned to that supplied code,
scored one token at a time. This is teacher-forced scoring. The budget is three scoring calls per
challenged context and condition, not three generated tokens. The mean of the
three per-token scores is the attack score; larger values lead the attacker to flag the record as having been present. The attacker has no state access, retained probes,
greedy generations, continuation scores or evaluation never-stored references.
Prompts, weights, direction and budget are fixed before evaluation.

Let \(s_{(1)}\leq\cdots\leq s_{(m)}\) be the never-stored calibration
scores. At nominal level \(\alpha=0.05\), we set
\[
 k=\left\lceil(m+1)(1-\alpha)\right\rceil,\qquad
 \theta=s_{(k)},\qquad \widehat y=\mathbf{1}\{s>\theta\}.
\]
If \(k>m\), the threshold is infinite. For the 40 calibration contexts,
\(k=39\) and \(\theta=-2.560180\) nats per token. In other words, the threshold
is the 39th smallest calibration score. Only scores strictly above it are
flagged; a tie is not flagged. One calibration score exceeds it (2.5\%); three of the
40 evaluation never-stored scores do (7.5\%). We report this measured FPR,
whose Wilson 95\% interval is 2.6--19.9\%, rather than substituting the nominal
5\% level. The balanced synthetic design does not establish a deployment FPR
guarantee.

For each condition, its 40 candidate scores form the positive sample and the
matched never-stored scores form the negative sample. The area under the
receiver operating characteristic curve (AUC) measures how well the two
score distributions separate. It compares all positive--negative score pairs,
with half credit for ties. Its intervals resample whole evaluation contexts,
keeping each condition and its never-stored reference together. The true
positive rate (TPR) is the fraction of condition scores above the threshold;
the false positive rate (FPR) is the fraction of never-stored scores above it.
Detection intervals additionally resample the calibration contexts
independently, select a new threshold, and apply it to the resampled
evaluation pairs. The two-stage interval for masking's TPR is 5--50\%; for
its paired TPR-minus-FPR difference it is 2.5--37.5 percentage points around
the observed 22.5-point advantage. Conditional Wilson intervals in
Table~\ref{tab:kda-prospective-attack} show rate uncertainty at the deployed
threshold. They also avoid reading a degenerate all-success bootstrap
interval as a population guarantee.

Masking with transport leaves small score separation: AUC is 0.5188, although
its TPR equals the observed FPR at the selected threshold. After completing
the frozen analysis, we checked how matched positive/negative pairs contribute
to empirical AUC. Excluding comparisons from the same source context,
including duplicated sources during bootstrap, gives 0.5115
\([0.5033,0.5238]\) for masking with transport and 0.7333
\([0.6793,0.8007]\) for masking alone. This exploratory calculation is
reported in \path{followup/analysis/auc_pairing_sensitivity_v2.json}; it does
not replace the primary AUC or change the attack. Replay and the independent
rebuild have identical scores to never stored, so their measured AUC is 0.5
and their TPR equals FPR. These are null controls for this fixed attack.

\subsection{Complete active-state comparison}

Replay and the independent never-stored rebuild match the reference in all
80 contexts, including all 40 evaluation contexts. Each comparison covers 80
KDA recurrent/convolution arrays, 14 logical MLA key/value arrays, seven cache
offsets and a full next-token audit logit vector. We compare shapes and the
float32 maximum absolute differences, and check offsets directly. Every
reported difference is zero; unused attention-buffer capacity is excluded.
The all-context success interval is 95.4--100\% by Wilson, while the
evaluation-only interval is 91.2--100\%. The aggregate recurrent residual of
native transport, divided by the original present-minus-never-stored
difference norm, has evaluation mean 0.7011 \([0.6861,0.7146]\).
The complete analysis is \path{followup/analysis/kimi8_analysis_v2.json}.


\begin{table*}[t]
\centering
\footnotesize
\setlength{\tabcolsep}{3pt}
\caption{Target and retained-information scores in the Kimi cohort.}
\label{tab:kda-prospective-cohort}
\begin{tabular}{lccc}
\hline
Condition & \shortstack{Target log-probability\\$\Delta$} & \shortstack{Retained log-probability\\$\Delta\times10^3$} & \shortstack{Continuation NLL\\$\Delta$} \\
\hline
Record present & $3.2399\;[3.0625,\,3.4113]$ & $-0.249\;[-0.453,\,-0.043]$ & $0.0055\;[-0.0083,\,0.0192]$ \\
Attention masking & $0.4858\;[0.4257,\,0.5463]$ & $-0.269\;[-0.461,\,-0.085]$ & $-0.0025\;[-0.0161,\,0.0114]$ \\
Static receipt & $3.1783\;[3.0102,\,3.3414]$ & $0.101\;[-0.125,\,0.302]$ & $-0.0133\;[-0.0315,\,0.0038]$ \\
Diagonal ledger & $3.1884\;[3.0179,\,3.3528]$ & $-0.067\;[-0.223,\,0.078]$ & $-0.0041\;[-0.0193,\,0.0098]$ \\
Receipt transport & $3.1878\;[3.0180,\,3.3514]$ & $-0.069\;[-0.249,\,0.096]$ & $-0.0057\;[-0.0194,\,0.0078]$ \\
Masking + transport & $0.0178\;[0.0108,\,0.0245]$ & $-0.101\;[-0.234,\,0.014]$ & $0.0049\;[-0.0050,\,0.0151]$ \\
Checkpoint replay & $0.0000\;[0.0000,\,0.0000]$ & $0.000\;[0.000,\,0.000]$ & $0.0000\;[0.0000,\,0.0000]$ \\
Never stored & $0.0000\;[0.0000,\,0.0000]$ & $0.000\;[0.000,\,0.000]$ & $0.0000\;[0.0000,\,0.0000]$ \\
\shortstack[l]{Independent\\never stored} & $0.0000\;[0.0000,\,0.0000]$ & $0.000\;[0.000,\,0.000]$ & $0.0000\;[0.0000,\,0.0000]$ \\
Prompt forgetting & $3.1205\;[2.9432,\,3.2908]$ & $0.072\;[-0.230,\,0.384]$ & $0.0124\;[-0.0081,\,0.0339]$ \\
Neutral instruction & $3.2588\;[3.0801,\,3.4306]$ & $-2.878\;[-3.675,\,-2.180]$ & $-0.0356\;[-0.0591,\,-0.0118]$ \\
\hline
\end{tabular}
\par\vspace{3pt}
\begin{minipage}{0.98\textwidth}\footnotesize
Entries are condition-minus-never-stored differences on 40 evaluation contexts, with paired 95\% intervals from 2,000 whole-context resamples. Scores average three target prompts, three retained-record queries, or token-weighted continuation NLL within each context, then give contexts equal weight. Positive target differences indicate residual lift; positive NLL differences indicate worse prediction. Unscaled differences are in nats/token; retained differences are multiplied by $10^3$. The 40 calibration contexts are separate.
\end{minipage}
\end{table*}

\begin{table*}[t]
\centering
\footnotesize
\setlength{\tabcolsep}{3pt}
\caption{Three-call detection results in the Kimi cohort.}
\label{tab:kda-prospective-attack}
\begin{tabular}{lccc}
\hline
Condition & AUC [95\% interval] & \shortstack{TPR, \%\\{[two-stage 95\% interval]}} & \shortstack{TPR Wilson\\95\% interval, \%} \\
\hline
Record present & $1.0000\;[1.0000,\,1.0000]$ & $100.0\;[100.0,\,100.0]$ & $[91.2,\,100.0]$ \\
Attention masking & $0.7400\;[0.6950,\,0.8094]$ & $30.0\;[5.0,\,50.0]$ & $[18.1,\,45.4]$ \\
Static receipt & $1.0000\;[1.0000,\,1.0000]$ & $100.0\;[100.0,\,100.0]$ & $[91.2,\,100.0]$ \\
Diagonal ledger & $1.0000\;[1.0000,\,1.0000]$ & $100.0\;[100.0,\,100.0]$ & $[91.2,\,100.0]$ \\
Receipt transport & $1.0000\;[1.0000,\,1.0000]$ & $100.0\;[100.0,\,100.0]$ & $[91.2,\,100.0]$ \\
Masking + transport & $0.5188\;[0.5113,\,0.5431]$ & $7.5\;[0.0,\,22.5]$ & $[2.6,\,19.9]$ \\
Checkpoint replay & $0.5000\;[0.5000,\,0.5000]$ & $7.5\;[0.0,\,22.5]$ & $[2.6,\,19.9]$ \\
Never stored & $0.5000\;[0.5000,\,0.5000]$ & $7.5\;[0.0,\,22.5]$ & $[2.6,\,19.9]$ \\
\shortstack[l]{Independent\\never stored} & $0.5000\;[0.5000,\,0.5000]$ & $7.5\;[0.0,\,22.5]$ & $[2.6,\,19.9]$ \\
Prompt forgetting & $1.0000\;[1.0000,\,1.0000]$ & $100.0\;[100.0,\,100.0]$ & $[91.2,\,100.0]$ \\
Neutral instruction & $1.0000\;[1.0000,\,1.0000]$ & $100.0\;[100.0,\,100.0]$ & $[91.2,\,100.0]$ \\
\hline
\end{tabular}
\par\vspace{3pt}
\begin{minipage}{0.98\textwidth}\footnotesize
The threshold is $-2.560180$ nats/token and is calibrated using only the 40 never-stored calibration scores. The attacker sees candidate-continuation log probabilities, not model state or evaluation never-stored references. Replay and independent rebuilding are measured null controls: their AUC is $0.5$ and their TPR equals FPR. Degenerate bootstrap intervals at zero or one describe the observed cohort; the Wilson intervals expose finite-sample rate uncertainty. This is one fixed attack on a synthetic cohort, not a general indistinguishability guarantee.
\end{minipage}
\end{table*}

\subsection{Independent Qwen confirmation}\label{app:qwen-cohort}

The Qwen cohort comprises 32 synthetic records with unique target phrases,
frozen before model evaluation and independent of the three mechanistic
Qwen scenarios. Prefixes span eight domains
and contain two to five retained entries. Prefix lengths are 91--191 tokens
and target records are 37--41 tokens; each target receives both a short suffix
of 90--94 tokens and a long suffix of 527--539 tokens. Raw concatenated text
and segment tokenizations agree at every boundary. No qualification filter,
target replacement or outcome-dependent stopping is used.

The model is Qwen3.5-4B-Base at revision
\texttt{57370f0e\allowbreak{}a82c3cca\allowbreak{}33558a95\allowbreak{}212e032c\allowbreak{}344e5fd5}, run in bf16 with
Transformers 5.12.1. Each suffix evaluation advances the record-present,
checkpoint-replayed, fresh never-stored and independently repeated
never-stored trajectories. Replay and both fresh references use identical
surviving tokens and segment schedules. All 64 replay and all 64 repeat
comparisons have exact numerical equality on 65 arrays or outputs and 82
bookkeeping flags, covering every active recurrent and attention layer,
final logits and logical cache position. Arrays are checked with
\texttt{torch.equal} and zero maximum absolute difference; this is not a
byte-level comparison of representations.

We inspect transport and forcing at recurrent layers 0, 16 and 30. Inputs are
captured before the live kernel, permitting separate checks of the handwritten
recurrence, frozen-input transport identity and native forcing decomposition.
The largest state-scaled frozen-transport error is \(1.047\times10^{-6}\),
live-recurrence conformance error is \(1.190\times10^{-6}\), and relative
forcing-decomposition error is \(5.935\times10^{-6}\); all are below the
frozen acceptance threshold of 0.005. The native-mismatch check requires a
nonzero native record effect and at least one selected layer above five times
its empirical floor. That floor is the maximum of the absolute frozen
identity error, handwritten-recurrence/live-kernel mismatch and
\(10^{-12}\). All 192 selected layer--context--suffix checks exceed five
times the floor. All 32 records pass every frozen check for both suffixes;
the record-level Wilson 95\% interval is 89.3--100\%.

Table~\ref{tab:kda-qwen-cohort} reports means and 95\% intervals. We average
layers 0, 16 and 30 within each record before resampling the 32 records;
suffixes and layers remain paired. Effect scaling divides the residual norm
by the native record-present-minus-never-stored difference norm. It asks how
large the remaining error is relative to the record's original effect. State
scaling divides by the never-stored state norm, comparing the error with the
size of the reference state.
The native residual grows relative to the native record effect for the long
suffix, but is smaller relative to the never-stored state norm. The suffix
texts differ, so this paired contrast compares two conditions rather than
isolating a causal effect of length. Reporting both normalizations also
prevents interpreting it as universal worsening. In the long suffixes a
frozen record effect can approach the
floating-point floor; the frozen-transport decision therefore uses
state scaling and the native decision uses its absolute numerical control.
The aggregate evidence is \path{followup/analysis/qwen_summary.json}.

\begin{table*}[t]
\centering
\footnotesize
\setlength{\tabcolsep}{3pt}
\caption{Qwen transport residuals with short and long suffixes.}
\label{tab:kda-qwen-cohort}
\begin{tabular}{@{}lccc@{}}
\toprule
quantity & short suffix & long suffix & paired long minus short \\
\midrule
\shortstack[l]{native transport\\effect scaled} &
\(0.5222\ [0.5151,0.5296]\) & \(0.5803\ [0.5760,0.5847]\) & \(0.0581\ [0.0506,0.0655]\) \\
\shortstack[l]{native transport\\state scaled} &
\(0.0946\ [0.0914,0.0978]\) & \(0.0878\ [0.0856,0.0903]\) & \(-0.0067\ [-0.0104,-0.0032]\) \\
\shortstack[l]{frozen static\\state scaled} &
\(0.2884\ [0.2850,0.2917]\) & \(0.3059\ [0.3022,0.3093]\) & \(0.0174\ [0.0161,0.0187]\) \\
\shortstack[l]{frozen ledger\\state scaled} &
\(0.1032\ [0.1019,0.1045]\) & \(0.0892\ [0.0881,0.0903]\) & \(-0.0140\ [-0.0148,-0.0132]\) \\
\bottomrule
\end{tabular}
\end{table*}

\subsection{Kimi precision comparison using the same source weights}\label{app:precision-control}

This control asks whether native transport mismatch persists when we change
precision while keeping the source weights fixed. We loaded the original
Kimi bf16 weights and derived a paired 8-bit model from them. All 20 weight
shards were verified against their SHA-256 values at revision
\texttt{e1df551a\allowbreak{}447157d4\allowbreak{}658b573f\allowbreak{}9a695d57\allowbreak{}658590e9} of
\path{moonshotai/Kimi-Linear-48B-A3B-Instruct}. The source contains original
bf16 weights and fp32 parameters, rather than dequantized 8-bit weights.
The derived variant uses MLX affine weight quantization with eight bits,
group size 64 and the Kimi model's quantization predicate through a
supported-module/divisibility check. Quantization occurs in memory before
attention-mask installation. The comparison concerns this specified recipe
and its runtime kernels.

Eight indices from the main evaluation manifest were selected before reading
their outcomes: 000, 001, 002, 003, 006, 007, 008 and 009. Four have 128-token
suffixes and four have 1,024-token suffixes. Context segments, probes, targets,
instructions and stop-token IDs match exactly across the paired tokenizers,
and observed token plans agree. Each precision is compared with its own
never-stored reference. All eight contexts qualify in each precision, and
replay and repeated rebuilding match all 94 active arrays, seven offsets and
the audit logits in every context. With eight contexts, the Wilson 95\%
interval for an all-success rate is 67.6--100\%.

For an independent arithmetic diagnostic, we capture each native branch's
\(k,v,g,\beta\) and reconstruct its recurrent endpoints in serial fp32.
We also propagate two initial states through identical present-branch inputs
and writes and compare their difference with a separately transported receipt.
We use these checks to estimate the numerical error introduced by the
arithmetic. This empirical error floor is the larger of the summed
native-versus-serial endpoint reconstruction norms and the frozen-input
identity-error norm. It measures
observed arithmetic discrepancies in these executions, not a rigorous error
bound. The global transport residual is 21,928--84,695 times its own floor in
derived 8-bit and 20,970--80,373 times its own floor in bf16. Maximum global
endpoint reconstruction relative errors are approximately
\(1.1\times10^{-6}\) in both. Three of the 160 layer--context residuals
are at or below their own floor in each precision, so the aggregate result
does not assert a mismatch above the floor in every layer.

The native residual divided by the original state difference ranges from
0.665 to 0.780 in derived 8-bit and from 0.645 to 0.817 in original bf16.
Table~\ref{tab:kda-precision-cohort} gives the paired target log-probability
lifts in nats per token. Each precision uses its own never-stored reference.
The 95\% intervals use 2,000 paired whole-context resamples; the last column
compares these own-reference lifts. The
same-master comparison supports persistence of the native transport mismatch
above measured arithmetic discrepancies. It does not require readout values
to be equivalent across precisions: masking with transport has a
bf16-minus-8-bit change of \(-0.031\) nats per token with interval
\([-0.054,-0.009]\). Retained and continuation measurements for all eleven
conditions are in \path{followup/analysis/precision_comparison_v1.json}.
The separately pinned community-model comparison remains in
\path{followup/analysis/precision_cached8_comparison_v1.json} as a secondary
artifact comparison; its exact master-checkpoint relationship is unverified.

\begin{table*}[t]
\centering
\small
\caption{Kimi target lift at two precisions using the same source weights.}
\label{tab:kda-precision-cohort}
\begin{tabular}{@{}lccc@{}}
\toprule
condition & derived 8-bit lift & original bf16 lift & bf16 minus 8-bit lift \\
\midrule
present & \(3.287\ [2.793,3.810]\) & \(3.255\ [2.773,3.756]\) & \(-0.032\ [-0.076,0.003]\) \\
attention masking & \(0.426\ [0.268,0.580]\) & \(0.403\ [0.227,0.583]\) & \(-0.023\ [-0.068,0.017]\) \\
receipt transport & \(3.239\ [2.766,3.753]\) & \(3.211\ [2.739,3.700]\) & \(-0.028\ [-0.073,0.010]\) \\
masking + transport & \(0.016\ [-0.001,0.034]\) & \(-0.015\ [-0.040,0.011]\) & \(-0.031\ [-0.054,-0.009]\) \\
checkpoint replay & \(0\ [0,0]\) & \(0\ [0,0]\) & \(0\ [0,0]\) \\
prompt forgetting & \(3.159\ [2.668,3.680]\) & \(3.140\ [2.653,3.632]\) & \(-0.019\ [-0.073,0.028]\) \\
neutral instruction & \(3.313\ [2.815,3.836]\) & \(3.286\ [2.798,3.789]\) & \(-0.027\ [-0.072,0.015]\) \\
\bottomrule
\end{tabular}
\end{table*}

\subsection{Amendment and sequential maintenance}\label{app:maintenance}

A correction needs to install the new answer as well as remove the old one.
We tested twelve frozen synthetic amendment contexts with varied target
positions among four records. Six contexts have no extra neutral suffix,
three add 128 tokens and three add 512. All old targets have record-present
lift above one nat; none is excluded. The maintenance manifest's payload SHA-256, computed from sorted-key JSON
before adding its \texttt{sha256} field, is
\texttt{50edd37c\allowbreak{}5f7d060d\allowbreak{}aeb976e5\allowbreak{}39abffbc\allowbreak{}421cbf7f\allowbreak{}f2ef10a9\allowbreak{}f4ecd7be\allowbreak{}d01b45e7}.
We restore the checkpoint before the target, ingest its replacement and replay
the surviving records. A fresh corrected-from-start context supplies the
reference. Greedy decoding has a 16-token cap. Exact-answer and containment
rules follow the main cohort, with punctuation preserved.

All twelve contexts return the old answer before amendment and the new answer
afterward; none returns the new answer before amendment or the old answer
afterward. Retained answers match exactly in 35 of 36 queries, with all three
exact in 11 of 12 contexts; all 36 contain their targets. These outcomes are
grouped by context for uncertainty in Table~\ref{tab:kda-maintenance-cohort}.
Paired whole-context bootstrap intervals give an old-answer score change of
\(-6.8277\ [-7.1725,-6.4468]\), a new-answer change of
\(7.1025\ [6.4566,7.8272]\), and a mean retained-score change of
\(-0.0019\ [-0.0087,0.0023]\) nats per token relative to the original
context. All three score changes relative to the corrected-from-start
reference are zero.

We next form six disjoint pairs from these twelve targets and test both
deletion orders. Each path deletes one target, adds a new record, then
deletes the other target. For a pair of targets \(A\) and \(B\), we therefore
test both \(A\)-then-\(B\) and \(B\)-then-\(A\), adding the new record between
the two deletions. Thus there are twelve paths and 36 transitions,
with a fresh surviving-record reference at every transition. Both final
orders have identical surviving-record keys and saved probe scores in all
six pairs. Both deleted answers are absent in all pairs. Every retained and
newly admitted answer is exact in both final orders for five pairs; all such
answers contain their targets in all six. The independent sequential unit is
the pair, not the path or transition, and these checks do not enlarge the
twelve-target cohort.

All 48 intervention-to-reference comparisons (twelve amendments and 36
transitions) and all 48 independent reference repeats have zero measured
state and logit differences. Each comparison checks dtypes, shapes, all 94
active arrays, seven MLA offsets and a full next-token logit vector.
The evidence is \path{followup/analysis/maintenance_summary_v1.json}.
These runs overlapped other experiments and a weight download; their times
are not used as controlled latency measurements. Table~\ref{tab:kda-maintenance-cohort}
reports Wilson 95\% intervals. Amendment outcomes use 12 contexts; sequence
outcomes use six disjoint target pairs, with success required in both orders.

\begin{table*}[t]
\centering
\small
\caption{Record correction and sequential-deletion outcomes.}
\label{tab:kda-maintenance-cohort}
\begin{tabular}{@{}p{0.56\linewidth}cc@{}}
\toprule
outcome & count & interval, \% \\
\midrule
new answer exact after amendment & 12/12 & 75.8--100 \\
old answer exact after amendment & 0/12 & 0--24.2 \\
all retained answers exact after amendment & 11/12 & 64.6--98.5 \\
all retained targets contained after amendment & 12/12 & 75.8--100 \\
all sequence transitions match fresh state/logits & 6/6 & 61.0--100 \\
both deleted answers absent in both orders & 6/6 & 61.0--100 \\
all retained/admitted answers exact in both orders & 5/6 & 43.6--97.0 \\
all retained/admitted targets contained in both orders & 6/6 & 61.0--100 \\
\bottomrule
\end{tabular}
\end{table*}

%% file: kda_historical_tables.tex
\subsection{Corpus, suffix and cross-family audit tables}\label{app:initial-tables}

The mechanism audits use the corpora and qualification rules in the following
tables. They are separate from the prospectively selected synthetic cohort
in Appendix~\ref{app:prospective}.

Table~\ref{tab:audit-denominators} separates all attempted cases from those
that met the recall criterion before deletion. Qualification requires the
record-present field lift to reach the fixed \(0.05\)-nat threshold and,
for short categorical targets, the first token to rank among the top ten.
Cases below this threshold remain in the attempted count.

\begin{table*}[t]
\centering
\caption{Attempted and qualified cases in the mechanism audits.}
\label{tab:audit-denominators}
\footnotesize
\setlength{\tabcolsep}{3pt}
\begin{tabular}{@{}p{0.23\linewidth}p{0.17\linewidth}p{0.10\linewidth}p{0.40\linewidth}@{}}
\toprule
surface & attempted & qualified & audited unit \\
\midrule
Kimi short tables & 8 records & 8 &
final logits and 80 KDA arrays per record \\
Kimi 128-record context & 9 records & 9 &
same exactness surface plus nine position timings \\
Kimi long notes & 5 records & 4\textsuperscript{a} &
same exactness surface \\
Kimi fixed-position masking vs.\ replay & 12 records (3 per corpus, 4 corpora) &
11\textsuperscript{b} &
lift after masking and after replay; replay residual on logits and KDA state \\
Kimi suffix sweep, synthetic & \SweepSynAttempted{} configurations &
\SweepSynAdmitted{} &
20 layers at \SweepSynNumCuts{} suffix lengths; probes and \SampSynK{} sampled draws per condition \\
Kimi suffix sweep, TOFU & \SweepTofuAttempted{} configurations &
\SweepTofuAdmitted{} & same surface \\
Qwen staged check & 3 scenarios, 6 paths & 6 &
65 arrays/outputs and 82 flags across the six paths \\
Mamba-2 / Falcon-H1 / RWKV-7 &
\FamMambaAttempted{} / \FamFalconAttempted{} / \FamRwkvAttempted{} configurations &
\FamMambaAdmitted{} / \FamFalconAdmitted{} / \FamRwkvAdmitted{} &
all recurrent layers at five suffix lengths; probes at every length \\
\bottomrule
\end{tabular}

{\footnotesize\textsuperscript{a}The excluded record is the oldest MIMIC-Note
record; its record-present lift of \(0.00451\) nats falls below the frozen
threshold, so deletion is not scored for it.
\textsuperscript{b}The early MIMIC-Note record of the fixed-position runs has
a record-present lift of \(0.004\) nats, also below the threshold; its
masking result is not scored and its replay residual is reported
(Table~\ref{tab:kimi-oracle}).\par}
\end{table*}

Table~\ref{tab:kda-behavior} compares seven conditions, separately for each
corpus. Lift, first-token rank and first-token Kullback--Leibler (KL)
divergence are medians. The greedy column reports the percentage of
configurations whose answer contains the target string, ignoring
capitalization. Lift and KL are in nats. KL compares the first-token
distributions as
\(D_{\rm KL}(p_{\rm never\text{-}stored}\Vert p_{\rm condition})\).

\begin{table*}[t]
\centering
\caption{Target recovery after 4,096 suffix tokens.}
\label{tab:kda-behavior}
\small
\setlength{\tabcolsep}{3pt}
\begin{tabular}{@{}lrrrrrrrr@{}}
\toprule
& \multicolumn{4}{c}{Synthetic: 24 configurations} & \multicolumn{4}{c}{TOFU: 16 configurations} \\
\cmidrule(lr){2-5}\cmidrule(l){6-9}
condition & lift & rank & greedy \% & KL & lift & rank & greedy \% & KL \\
\midrule
record present & 2.16 & 1 & 100 & 4.55 & 1.34 & 1 & 25 & 0.549 \\
static receipt & 2.15 & 1 & 100 & 4.64 & 1.31 & 1 & 25 & 0.457 \\
diagonal ledger & 2.16 & 1 & 100 & 4.52 & 1.33 & 1 & 25 & 0.563 \\
full-matrix receipt & 2.15 & 1 & 100 & 4.62 & 1.34 & 1 & 25 & 0.536 \\
attention masked & 0.0622 & 89 & 0 & 0.0354 & 0.0472 & 1 & 0 & 0.00419 \\
masked + full-matrix receipt & -0.0115 & 138 & 0 & 0.0435 & 0.0154 & 1 & 0 & 0.00425 \\
never-stored state & 0 & 167 & 0 & 0 & 0 & 1 & 0 & 0 \\
\bottomrule
\end{tabular}
\end{table*}

Table~\ref{tab:kda-sampling} reports target hits from 100 draws per
configuration at temperature 1, separately at ingestion and after 4,096
suffix tokens. It compares five conditions. The combined intervention uses
a static receipt, not the full-matrix receipt in Table~\ref{tab:kda-behavior}.

\begin{table*}[t]
\centering
\caption{Target hits in sampled answers.}
\label{tab:kda-sampling}
\small
\setlength{\tabcolsep}{6pt}
\begin{tabular}{@{}lrrrr@{}}
\toprule
& \multicolumn{2}{c}{Synthetic: 2,400 draws} & \multicolumn{2}{c}{TOFU: 1,600 draws} \\
\cmidrule(lr){2-3}\cmidrule(l){4-5}
condition & 0 tokens & 4,096 & 0 tokens & 4,096 \\
\midrule
record present & 2349 & 2287 & 358 & 325 \\
static receipt & 2314 & 2278 & 245 & 326 \\
attention masked & 1 & 1 & 2 & 11 \\
masked + static receipt & 1 & 0 & 1 & 10 \\
never-stored state & 1 & 0 & 0 & 12 \\
\bottomrule
\end{tabular}
\end{table*}

Table~\ref{tab:kimi-oracle} compares masking and replay on the same records.
These cases are separate from the 21 primary corpus replay checks. Lift is
measured in nats against the never-stored state. The replay residual is the
maximum absolute difference over final logits and 80 KDA arrays.

\begin{table*}[t]
\centering
\caption{Attention masking and replay at three record positions.}
\label{tab:kimi-oracle}
\footnotesize
\setlength{\tabcolsep}{5pt}
\begin{tabular}{@{}ll S[table-format=1.2] S[table-format=-1.2] S[table-format=1.0] S[table-format=4.0]@{}}
\toprule
corpus & position & {present lift} & {lift after mask} &
{replay residual} & {suffix tokens} \\
\midrule
TOFU & early & 0.78 & -0.08 & 0 & 762 \\
TOFU & middle & 1.51 & 0.11 & 0 & 399 \\
TOFU & late & 1.50 & 0.19 & 0 & 63 \\
\addlinespace
MIMIC-CDS & early & 0.77 & 0.02 & 0 & 366 \\
MIMIC-CDS & middle & 1.97 & 0.24 & 0 & 176 \\
MIMIC-CDS & late & 0.61 & 0.19 & 0 & 26 \\
\addlinespace
MIMIC-Note & early\textsuperscript{a} & 0.00 & 0.00 & 0 & 6635 \\
MIMIC-Note & middle & 2.33 & -0.02 & 0 & 3135 \\
MIMIC-Note & late & 1.75 & -0.04 & 0 & 1017 \\
\bottomrule
\end{tabular}
\par\vspace{3pt}
\begin{minipage}{0.98\textwidth}\footnotesize
\textsuperscript{a}The early MIMIC-Note record falls below the 0.05-nat recall
threshold. Its masking result is not scored; the displayed 0.00 is a placeholder,
not a measured zero lift. Its replay residual is reported.
\end{minipage}
\end{table*}

Table~\ref{tab:kda-replay-checks} summarizes checks with different cohorts
and metrics; its two rows are not a paired comparison. Corpus replay uses
the cohorts in Table~\ref{tab:audit-denominators}. Receipt checks use six
configurations, three layers and two suffixes, giving 36 layer--suffix checks.

\begin{table*}[t]
\centering
\caption{Replay and receipt checks: cohorts and outcomes.}
\label{tab:kda-replay-checks}
\small
\setlength{\tabcolsep}{5pt}
\begin{tabular}{@{}lll@{}}
\toprule
correction & population & result \\
\midrule
checkpoint replay & 21 qualified records & logits and 80 arrays: residual 0 \\
full-matrix receipt & 36 layer--suffix checks & forcing norm ratio 0.064--1.010 \\
\bottomrule
\end{tabular}
\end{table*}